\newcolumntype{L}{>{\centering\arraybackslash} m{0.04\columnwidth}} 
\newcolumntype{R}{>{\centering\arraybackslash} m{0.48\columnwidth}} 
\newcolumntype{S}{>{\centering\arraybackslash} m{0.32\columnwidth}} 
\newcommand{\sign}{{\mathrm {sign}}}
\newtheorem{lemma}{Lemma}
\newtheorem{theorem}{Theorem}
\newtheorem{corollary}{Corollary}
\newtheorem{remark}{Remark}
\newtheorem{openProblem}{Open Problem}
\newcommand{\E}{\mathbb{E}}
\DeclareMathOperator*{\argmin}{argmin} 
\DeclareMathOperator*{\argmax}{argmax} 
\newcommand{\reals}{\mathbb{R}}
\newcommand{\figref}[1]{Figure~\ref{#1}}
\newcommand{\secref}[1]{Section~\ref{#1}}
\newcommand{\thmref}[1]{Theorem~\ref{#1}}
\newcommand{\corref}[1]{Corollary~\ref{#1}}
\newcommand{\lemref}[1]{Lemma~\ref{#1}}
\newcommand{\half}{\frac{1}{2}}
\newenvironment{myalgo}[1]%
{
\begin{center}
\begin{boxedminipage}{0.8\linewidth}
\begin{center}
\textbf{\texttt{#1}}
\end{center}
\rm
\begin{tabbing}
....\=...\=...\=...\=...\=  \+ \kill
} %
{\end{tabbing} 
\end{boxedminipage} \end{center} 
}
\title{Accelerated Proximal Stochastic Dual Coordinate Ascent for Regularized Loss Minimization}
\author{Shai Shalev-Shwartz\thanks{School of Computer Science and
    Engineering, The Hebrew University, Jerusalem, Israel}
 \and 
Tong Zhang\thanks{Department of Statistics, Rutgers University, NJ,
  USA}~\thanks{Baidu Inc., Beijing, China} }
\date{}
\begin{document}

\maketitle

\begin{abstract}
  We introduce a proximal version of the stochastic dual coordinate
  ascent method and show how to accelerate the method using an
  inner-outer iteration procedure. We analyze the runtime of the
  framework and obtain rates that improve state-of-the-art results for
  various key machine learning optimization problems including SVM,
  logistic regression, ridge regression, Lasso, and multiclass
  SVM. Experiments validate our theoretical findings.
\end{abstract}

\section{Introduction}

We consider the following generic optimization problem associated with
regularized loss minimization of linear predictors: Let
$X_1,\ldots,X_n$ be matrices in $\reals^{d \times k}$ (referred to as instances), let
$\phi_1,\ldots,\phi_n$ be a sequence of vector convex functions
defined on $\reals^k$ (referred to as loss functions), let $g(\cdot)$ be a convex function defined on
$\reals^d$ (referred to as a regularizer), and let $\lambda \geq 0$
(referred to as a regularization parameter). Our goal is to solve:
\begin{equation} \label{eqn:PrimalProblem}
\min_{w \in \reals^d} P(w) ~~~~\textrm{where}~~~~
P(w) = \left[ \frac{1}{n} \sum_{i=1}^n \phi_i( X_i^\top w) + \lambda g(w) \right] .
\end{equation}
For example, in ridge regression the regularizer is $g(w) =
\frac{1}{2} \|w\|_2^2$, the instances are column vectors, and for
every $i$ the $i$'th loss function is $\phi_i(a) = \frac{1}{2}
(a-y_i)^2$, for some scalar $y_i$.

Let $w^* = \argmin_w P(w)$ (we will later make assumptions that imply
that $w^*$ is unique).  We say that $w$ is $\epsilon$-accurate if
$P(w) - P(w^*) \le \epsilon$. Our main result is a new algorithm for
solving \eqref{eqn:PrimalProblem}.  If $g$ is $1$-strongly convex and
each $\phi_i$ is $(1/\gamma)$-smooth (meaning that its gradient is
$(1/\gamma)$-Lipschitz), then our algorithm finds, with probability of
at least $1-\delta$, an $\epsilon$-accurate solution to
\eqref{eqn:PrimalProblem} in time
\begin{align*}
  &O\left(d\left(n+
      \min\left\{\frac{1}{\lambda\,\gamma},\sqrt{\frac{n}{\lambda\,\gamma}}\right\}\right)\log(1/\epsilon)\,
    \log(1/\delta)\,\max\{1,\log^2(1/(\lambda\,\gamma\,n))\}\right) \\
  &~~=~~ \tilde{O}\left(d\left(n+
      \min\left\{\frac{1}{\lambda\,\gamma},\sqrt{\frac{n}{\lambda\,\gamma}}\right\}\right)\right)
  ~.
\end{align*}
This applies, for example, to ridge regression and to logistic
regression with $L_2$ regularization. The $O$ notation hides
constants terms and the $\tilde{O}$ notation hides constants and
logarithmic terms. We make these explicit in the formal statement of
our theorems.

Intuitively, we can think of $\frac{1}{\lambda \gamma}$ as the
condition number of the problem. If the condition number is $O(n)$
then our runtime becomes $\tilde{O}(dn)$. This means that the
runtime is nearly linear in the data size. This matches the recent
result of \citet{ShalevZh2013,LSB12-sgdexp}, but our setting is
significantly more general. When the condition number is much larger
than $n$, our runtime becomes $\tilde{O}(d
\sqrt{\frac{n}{\lambda\,\gamma}})$. This significantly improves over
the result of \cite{ShalevZh2013,LSB12-sgdexp}. It also significantly
improves over the runtime of accelerated gradient descent due to
\citet{nesterov2007gradient}, which is
$\tilde{O}(d\,n\,\sqrt{\frac{1}{\lambda\,\gamma}})$.

By applying a smoothing technique to $\phi_i$, we also derive a method
that finds an $\epsilon$-accurate solution to
\eqref{eqn:PrimalProblem} assuming that each $\phi_i$ is
$O(1)$-Lipschitz, and obtain the runtime
\[
\tilde{O}\left(d\left(n+
  \min\left\{\frac{1}{\lambda\,\epsilon},\sqrt{\frac{n}{\lambda\,\epsilon}}\right\}\right)\right) ~.
\]
This applies, for example, to SVM with the hinge-loss. It
significantly improves over the rate $\frac{d}{\lambda \epsilon}$ of
SGD (e.g. \citep{ShalevSiSr07}), when $\frac{1}{\lambda \epsilon} \gg n$.

We can also apply our results to non-strongly convex regularizers
(such as the $L_1$ norm regularizer), or to non-regularized
problems, by adding a slight $L_2$ regularization. For example, for
$L_1$ regularized problems, and assuming that each $\phi_i$ is
$(1/\gamma)$-smooth, we obtain the runtime of
\[
\tilde{O}\left(d\left(n+
  \min\left\{\frac{1}{\epsilon\,\gamma},\sqrt{\frac{n}{\epsilon\,\gamma}}\right\}\right)\right)
~.
\]
This applies, for example, to the Lasso problem, in which the goal is
to minimize the squared loss plus an $L_1$ regularization term. 

To put our results in context, in the table below we specify the
runtime of various algorithms (while ignoring constants and
logarithmic terms) for three key machine learning applications; SVM in
which $\phi_i(a) = \max\{0,1-a\}$ and $g(w) = \frac{1}{2} \|w\|_2^2$,
Lasso in which $\phi_i(a) = \frac{1}{2}(a-y_i)^2$ and $g(w) = \sigma
\|w\|_1$, and Ridge Regression in which $\phi_i(a) =
\frac{1}{2}(a-y_i)^2$ and $g(w) = \frac{1}{2} \|w\|_2^2$. Additional
applications, and a more detailed runtime comparison to previous work,
are given in \secref{sec:applications}. In the table below, SGD stands
for Stochastic Gradient Descent, and AGD stands for Accelerated
Gradient Descent. 
\begin{center}
\setlength{\extrarowheight}{6pt}
\begin{tabular}{c|c|c} \hline
Problem & Algorithm & Runtime \\ \hline
\multirow{3}{*}{SVM} & SGD \citep{ShalevSiSr07} & $\frac{d}{\lambda \epsilon}$ \\
& AGD \citep{nesterov2005smooth} & $dn \sqrt{\frac{1}{\lambda\,\epsilon}}$ \\ 
& \textbf{This paper} &  $d\left(n +
  \min\{\frac{1}{\lambda\,\epsilon},\sqrt{\frac{n}{\lambda
      \epsilon}}\}\right)$ \\ \hline
\multirow{4}{*}{Lasso} & SGD and variants
(e.g. \citep{Zhang02-dual,Xiao10,shalev2011stochastic}) &
$\frac{d}{\epsilon^2}$ \\
& Stochastic Coordinate Descent \citep{ShalevTe09,Nesterov10} & $\frac{dn}{\epsilon}$ \\
& FISTA \citep{nesterov2007gradient,beck2009fast} & $dn \sqrt{\frac{1}{\epsilon}}$ \\ 
& \textbf{This paper} &  $d\left(n +
  \min\{\frac{1}{\epsilon},\sqrt{\frac{n}{\epsilon}}\}\right)$ \\ \hline
\multirow{4}{*}{Ridge Regression} & Exact & $d^2n + d^3$ \\
& SGD \citep{LSB12-sgdexp}, SDCA \cite{ShalevZh2013} & $d\left(n + \frac{1}{\lambda}\right)$ \\
& AGD \citep{nesterov2007gradient} & $dn \sqrt{\frac{1}{\lambda}}$ \\ 
& \textbf{This paper} &  $d\left(n +
  \min\{\frac{1}{\lambda},\sqrt{\frac{n}{\lambda}}\}\right)$ \\ \hline
\end{tabular}
\end{center}

\paragraph{Technical contribution:} Our algorithm combines two
ideas. The first is a proximal version of stochastic dual coordinate
ascent (SDCA).\footnote{Technically speaking, it may be more accurate
  to use the term \emph{randomized} dual coordinate ascent, instead of
  \emph{stochastic} dual coordinate ascent. This is because our
  algorithm makes more than one pass over the data, and therefore
  cannot work directly on distributions with infinite
  support. However, following the convention in the prior machine learning literature, we
  do not make this distinction.}  In particular, we generalize the
recent analysis of \cite{ShalevZh2013} in two directions. First, we
allow the regularizer, $g$, to be a general strongly convex function
(and not necessarily the squared Euclidean norm). This allows us to
consider non-smooth regularization function, such as the $L_1$
regularization. Second, we allow the loss functions, $\phi_i$, to be
vector valued functions which are smooth (or Lipschitz) with respect
to a general norm. This generalization is useful in multiclass
applications. As in \cite{ShalevZh2013}, the runtime of this procedure
is $\tilde{O}\left(d\left(n + \frac{1}{\lambda
      \gamma}\right)\right)$. This would be a nearly linear time (in
the size of the data) if $\frac{1}{\lambda \gamma} = O(n)$. Our second
idea deals with the case $\frac{1}{\lambda \gamma} \gg n$ by
iteratively approximating the objective function $P$ with objective
functions that have a stronger regularization. In particular, each
iteration of our acceleration procedure involves approximate
minimization of $P(w) + \frac{\kappa}{2} \|w-y\|_2^2$, with respect to
$w$, where $y$ is a vector obtained from previous iterates and
$\kappa$ is order of $1/(\gamma n)$. The idea is that the addition of
the relatively strong regularization makes the runtime of our proximal
stochastic dual coordinate ascent procedure be $\tilde{O}(dn)$. And,
with a proper choice of $y$ at each iteration, we show that the
sequence of solutions of the problems with the added regularization
converge to the minimum of $P$ after $\sqrt{\frac{1}{\lambda \gamma
    n}}$ iterations. This yields the overall runtime of $d
\sqrt{\frac{n}{\lambda \gamma}}$.

\paragraph{Additional related work:}

As mentioned before, our first contribution is a proximal version of
the stochastic dual coordinate ascent method and extension of the
analysis given in \citet{ShalevZh2013}. Stochastic dual coordinate
ascent has also been studied in \citet{CollinsGlKoCaBa08} but in more
restricted settings than the general problem considered in this
paper. One can also apply the analysis of stochastic coordinate
descent methods given in \citet{richtarik2012iteration} on the dual
problem. However, here we are interested in understanding the primal
sub-optimality, hence an analysis which only applies to the dual
problem is not sufficient.

The generality of our approach allows us to apply it for multiclass
prediction problems. We discuss this in detail later on in
\secref{sec:applications}.  Recently, \cite{lacoste2012stochastic}
derived a stochastic coordinate ascent for structural SVM based on the
Frank-Wolfe algorithm. Although with different motivations, for the
special case of multiclass problems with the hinge-loss, their
algorithm ends up to be the same as our proximal dual ascent algorithm
(with the same rate). Our approach allows to accelerate the method and
obtain an even faster rate.

The proof of our acceleration method adapts Nesterov's estimation
sequence technique, studied in \citet{OlGlNe11,ScRoBa11}, to allow
approximate and stochastic proximal mapping. See also
\cite{baes2009estimate,d2008smooth}. In particular, it relies on
similar ideas as in Proposition 4 of \cite{ScRoBa11}. However, our specific requirement
is different, and the proof presented here is different and significantly simpler than
that of \cite{ScRoBa11}.

There have been several attempts to accelerate stochastic optimization
algorithms. See for example
\cite{hu2009accelerated,ghadimi2012optimal,cotter2011better} and the
references therein. However, the runtime of these methods have a
polynomial dependence on $1/\epsilon$ even if $\phi_i$ are smooth and
$g$ is $\lambda$-strongly convex, as opposed to the logarithmic
dependence on $1/\epsilon$ obtained here. As in
\cite{LSB12-sgdexp,ShalevZh2013}, we avoid the polynomial dependence
on $1/\epsilon$ by allowing more than a single pass over the data.

\section{Preliminaries}

All the functions we consider in this paper are proper convex
functions over a Euclidean space. We use $\reals$ to denote the set of
real numbers and to simplify our notation, when we use $\reals$ to
denote the range of a function $f$ we in fact allow $f$ to output the
value $+\infty$.

Given a function $f : \reals^d \to \reals$ we denote its \textbf{conjugate}
function by \[
f^*(y) = \sup_x~ [ y^\top x - f(x)] ~.
\]
Given a norm $\|\cdot\|_P$ we denote the \textbf{dual norm} by $\|\cdot\|_D$
where
\[
\|y\|_D
= \sup_{x:\|x\|_P=1} y^\top x.
\]
We use $\|\cdot\|$ or $\|\cdot\|_2$ to denote the $L_2$ norm,
$\|x\| = x^\top x$. We also use $\|x\|_1 = \sum_i |x_i|$ and
$\|x\|_\infty = \max_i |x_i|$. The \textbf{operator norm} of a matrix
$X$ with respect to norms $\|\cdot\|_P,\|\cdot\|_{P'}$ is defined as
\[
\|X\|_{P \to P'} = \sup_{u:  \|u\|_P =1} \|X u\|_{P'} ~.
\]

A function $f: \reals^k \to \reals^d$ is $L$-\textbf{Lipschitz} with respect to a
norm $\|\cdot\|_P$, whose dual norm is $\|\cdot\|_D$, if for all
$a, b \in \reals^d$, we have
  \[
  \|f(a)- f(b)\|_D \leq L\,\|a-b\|_P .
  \]
  A function $f: \reals^d \to \reals$ is $(1/\gamma)$-\textbf{smooth} with
  respect to a norm $\|\cdot\|_P$ if it is differentiable and its
  gradient is $(1/\gamma)$-Lipschitz with respect to $\|\cdot\|_P$. An
  equivalent condition is that for all $a, b \in \reals^d$, we have
  \[
  f(a) \leq f(b) + \nabla f(b)^\top (a-b) + \frac{1}{2\gamma} \|a-b\|_{P}^2 .
  \]
A function $f : \reals^d \to \reals$ is $\gamma$-\textbf{strongly convex} with
respect to $\|\cdot\|_{P}$ if 
\[
f(w+ v) \geq f(w) + \nabla f(w)^\top v +
\frac{\gamma}{2} \|v\|_{P}^2 .
\]
It is well known that $f$ is $\gamma$-strongly convex with respect to
$\|\cdot\|_P$ if and only if $f^*$ is $(1/\gamma)$-smooth with respect
to the dual norm, $\|\cdot\|_D$. 

The \textbf{dual problem} of \eqref{eqn:PrimalProblem} is
\begin{equation} \label{eqn:DualProblem}
\max_{\alpha \in \reals^{k \times n}} D(\alpha) ~~~\textrm{where}~~~ D(\alpha) = 
\left[ \frac{1}{n} \sum_{i=1}^n -\phi_i^*(-\alpha_i) -
  \lambda g^*\left( \tfrac{1}{\lambda n} \sum_{i=1}^n X_i \alpha_i \right) \right] ~,
\end{equation}
where $\alpha_i$ is the $i$'th column of the matrix $\alpha$, which
forms a vector in $\reals^k$. 

We will assume that $g$ is strongly convex which implies that
$g^*(\cdot)$ is continuous differentiable. If we define
\begin{equation} \label{eqn:walpha}
v(\alpha)= \frac{1}{\lambda
  n} \sum_{i=1}^n X_i \alpha_i   \qquad \text{and} \qquad  w(\alpha) = \nabla g^*(v(\alpha)),
\end{equation}
then it is known that $w(\alpha^*)=w^*$, where $\alpha^*$ is an optimal solution of (\ref{eqn:DualProblem}). 
It is also known that $P(w^*)=D(\alpha^*)$ which immediately implies that for all $w$ and $\alpha$, we have
$P(w) \geq D(\alpha)$, and hence the \textbf{duality gap} defined as
\[
P(w(\alpha))-D(\alpha)
\]
can be regarded as an upper bound on both the \textbf{primal
  sub-optimality},
$P(w(\alpha))-P(w^*)$, and on the \textbf{dual sub-optimality},
$D(\alpha^*)-D(\alpha)$.

\section{Main Results}

In this section we describe our algorithms and their analysis. We
start in \secref{sec:rdca} with a description of our proximal
stochastic dual coordinate ascent procedure (Prox-SDCA). Then, in
\secref{sec:accelerate} we show how to accelerate the method by
calling Prox-SDCA on a sequence of problems with a strong
regularization. Throughout the first two sections we assume that the
loss functions are smooth. Finally, we discuss the case of Lipschitz
loss functions in \secref{sec:Lipschitz}.

The proofs of the main acceleration theorem
(\thmref{thm:acceleratedThmMain}) is given in
\secref{sec:acceleratedThmMain}. The rest of the proofs are provided
in the appendix.

\subsection{Proximal Stochastic Dual Coordinate Ascent} \label{sec:rdca}

We now describe our proximal stochastic dual coordinate ascent
procedure for solving \eqref{eqn:PrimalProblem}. Our results in this
subsection holds for $g$ being a $1$-strongly convex function with
respect to some norm $\|\cdot\|_{P'}$ and every $\phi_i$ being a
$(1/\gamma)$-smooth function with respect to some other norm
$\|\cdot\|_P$. The corresponding dual norms are denoted by
$\|\cdot\|_{D'}$ and $\|\cdot\|_D$ respectively. 

The dual objective in \eqref{eqn:DualProblem} has a different dual
vector associated with each example in the training set.  At each
iteration of dual coordinate ascent we only allow to change the $i$'th
column of $\alpha$, while the rest of the dual vectors are kept
intact. We focus on a \emph{randomized} version of dual coordinate
ascent, in which at each round we choose which dual vector to update
uniformly at random.

At step $t$, let $v^{(t-1)} = (\lambda n)^{-1} \sum_i X_i
\alpha_i^{(t-1)}$ and let $w^{(t-1)} = \nabla g^*(v^{(t-1)})$.  We
will update the $i$-th dual variable $\alpha_i^{(t)} =
\alpha_i^{(t-1)} + \Delta \alpha_i$, in a way that will lead to a
sufficient increase of the dual objective.  For the primal problem,
this would lead to the update $v^{(t)} = v^{(t-1)} + (\lambda n)^{-1}
X_i \Delta \alpha_i$, and therefore $w^{(t)} = \nabla g^*(v^{(t)})$
can also be written as
\[
w^{(t)}= \argmax_{w} \left[w^\top v^{(t)}  - g(w) \right] ~=~ 
\argmin_w \left[ - w^\top \left(n^{-1}\sum_{i=1}^n X_i
    \alpha_i^{(t)}\right) + \lambda g(w)\right] ~.
\]
Note that this particular update is rather similar to the update step
of proximal-gradient dual-averaging method (see for example
\citet{Xiao10}).  The difference is on how $\alpha^{(t)}$ is updated.

The goal of dual ascent methods is to increase the dual objective as much as possible,
and thus the optimal way to choose $\Delta \alpha_i$ would be to
maximize the dual objective, namely, we shall let
\[
\Delta \alpha_i = \argmax_{\Delta \alpha_i \in \reals^k} \left[ -\frac{1}{n} \phi^*_i(-(\alpha_i + \Delta
\alpha_i)) - \lambda g^*( v^{(t-1)} + (\lambda n)^{-1}  X_i \Delta
\alpha_i)  \right] ~.
\]
However, for a complex $g^*(\cdot)$, this optimization problem may not
be easy to solve.  To simplify the optimization problem we can rely on
the smoothness of $g^*$ (with respect to a norm $\|\cdot\|_{D'}$) and
instead of directly maximizing the dual objective function, we try to
maximize the following proximal objective which is a lower bound of
the dual objective:
\begin{align*}
&~\argmax_{\Delta \alpha_i \in \reals^k} 
\left[ - \frac{1}{n} \phi^*_i(-(\alpha_i + \Delta
\alpha_i)) - \lambda \left(\nabla g^*(v^{(t-1)})^\top (\lambda n)^{-1}  X_i \Delta
\alpha_i +
\frac{1}{2} \| (\lambda n)^{-1}  X_i \Delta
\alpha_i\|_{D'}^2 \right) \right] \\
=&~\argmax_{\Delta \alpha_i \in \reals^k} 
\left[ -\phi^*_i(-(\alpha_i + \Delta
\alpha_i)) - w^{(t-1)\,\top} X_i \Delta
\alpha_i -
\frac{1}{2\lambda n} \| X_i \Delta
\alpha_i\|_{D'}^2
\right] .
\end{align*}
In general, this optimization problem is still not necessarily simple
to solve because $\phi^*$ may also be complex.  We will thus also
propose alternative update rules for $\Delta \alpha_i$ of the form
$\Delta \alpha_i = s (- \nabla \phi_i(X_i^\top w^{(t-1)}) -
\alpha_i^{(t-1)})$ for an appropriately chosen step size parameter
$s>0$.  Our analysis shows that an appropriate choice of $s$ still
leads to a sufficient increase in the dual objective.

It should be pointed out that we can always pick $\Delta \alpha_i$ 
so that the dual objective is non-decreasing. In fact, if 
for a specific choice of $\Delta \alpha_i$, the dual objective decreases,
we may simply set $\Delta \alpha_i=0$. 
Therefore throughout the proof we will assume that the dual objective
is non-decreasing whenever needed.

\begin{figure}[htbp]
\begin{myalgo}{Procedure Proximal Stochastic Dual Coordinate Ascent: Prox-SDCA($P,\epsilon,\alpha^{(0)}$)} 
\textbf{Goal:} Minimize $P(w) = \frac{1}{n} \sum_{i=1}^n
\phi_i(X_i^\top w) + \lambda g(w)$ \\
\textbf{Input:} Objective $P$, desired accuracy $\epsilon$, initial dual solution
$\alpha^{(0)}$ (default: $\alpha^{(0)}=0$) \\ 
\textbf{Assumptions:} \+ \\
$\forall i$, $\phi_i$ is $(1/\gamma)$-smooth w.r.t. $\|\cdot\|_P$ and
let $\|\cdot\|_D$ be the dual norm of $\|\cdot\|_P$ \\
 $g$ is $1$-strongly convex w.r.t. $\|\cdot\|_{P'}$ and
let $\|\cdot\|_{D'}$ be the dual norm of $\|\cdot\|_{P'}$ \\
 $\forall i$, $\|X_i\|_{D \to D'} \leq R$ \- \\
\textbf{Initialize} $v^{(0)}=\frac{1}{\lambda n} \sum_{i=1}^n X_i \alpha_i^{(0)}$, $w^{(0)}=\nabla g^*(0)$ \\
\textbf{Iterate:} for $t=1,2,\dots$ \+ \\
 Randomly pick $i$ \\
 Find $\Delta \alpha_i$ using any of the following options \+\+ \\
 (or any other update that achieves a larger dual objective): \- \\
 \textbf{Option I:} \+ \\ 
  $\displaystyle \Delta \alpha_i = \argmax_{\Delta \alpha_i}
\left[-\phi_i^*(-(\alpha_i^{(t-1)} + \Delta \alpha_i) ) - 
w^{(t-1)^\top} X_i \Delta \alpha_i - \frac{1}{2\lambda n} \left\| X_i
  \Delta \alpha_i \right\|_{D'}^2\right]$ \- \\
 \textbf{Option II:} \+ \\ 
  Let $u = - \nabla \phi_i(X_i^\top w^{(t-1)})$ and  $q = u- \alpha_i^{(t-1)} $ \\
   Let $\displaystyle s = \argmax_{s \in [0,1]} \left[-\phi_i^*(-(\alpha_i^{(t-1)} + sq) ) - 
s\,w^{(t-1)^\top} X_i q - \frac{s^2}{2\lambda n} \left\| X_i
  q \right\|_{D'}^2\right]$ \\ 
Set $\Delta \alpha_i = s q$ \- \\
 \textbf{Option III:} \+ \\
  Same as Option II but replace the definition of $s$ as follows:  \\
 Let $s = \min\left(1,\frac{\phi_i(X_i^\top w^{(t-1)})+\phi_i^*(-\alpha_i^{(t-1)})+ w^{(t-1)^\top} X_i \alpha^{(t-1)}_i
  + \frac{\gamma}{2} \|q\|_D^2}{ \|q\|_D^2 (\gamma + \frac{1}{\lambda n}
  \|X_i\|_{D \to D'}^2 )}\right)$ \-  \\
 \textbf{Option IV:} \+ \\
  Same as Option III but replace $\|X_i\|_{D \to D'}^2$ in the definition of $s$
  with $R^2$ \- \\
  \textbf{Option V:} \+ \\
  Same as Option II but replace the definition of $s$ to be $s =
  \frac{\lambda n \gamma}{R^2 + \lambda n
    \gamma}$ \-\- \\
  $\alpha^{(t)}_i \leftarrow \alpha^{(t-1)}_i + \Delta \alpha_i$ and
  for $j \neq i$, $\alpha^{(t)}_j \leftarrow \alpha^{(t-1)}_j$ \\
  $v^{(t)} \leftarrow v^{(t-1)} + (\lambda n)^{-1} X_i \Delta \alpha_i$ \\
  $w^{(t)} \leftarrow \nabla g^*(v^{(t)})$
  \- \\
  \textbf{Stopping condition}: \+ \\
  Let $T_0 < t$ (default: $T_0 = t - n - \lceil \frac{R^2}{\lambda
    \gamma} \rceil$ )\\
  \textbf{Averaging option:} \+ \\
  Let $\bar{\alpha} = \frac{1}{t-T_0} \sum_{i=T_0+1}^t \alpha^{(i-1)}$
  and $\bar{w} = \frac{1}{t-T_0} \sum_{i=T_0+1}^t
  w^{(i-1)}$ \- \\
  \textbf{Random option:} \+ \\
  Let $\bar{\alpha}=\alpha^{(i)}$ and $\bar{w} = w^{(i)}$ for some
  random $i \in T_0+1,\ldots,t$ \- \\
  Stop if $P(\bar{w})-D(\bar{\alpha}) \le \epsilon$ and output
  $\bar{w},\bar{\alpha},$ and $P(\bar{w})-D(\bar{\alpha})$
  \- \\
\end{myalgo}
\caption{The Generic Proximal Stochastic Dual Coordinate Ascent Algorithm}
\label{fig:sdca}
\end{figure}

The theorems below provide upper bounds on the number of iterations
required by our prox-SDCA procedure. 
\begin{theorem} \label{thm:smooth} 
  Consider Procedure Prox-SDCA as given in \figref{fig:sdca}.  Let
  $\alpha^*$ be an optimal dual solution and let
  $\epsilon > 0$. For every $T$ such that 
\[
T \geq \left(n +
  \frac{R^2}{\lambda \gamma}\right) \, \log\left( \left(n + \frac{R^2}{\lambda \gamma}\right)   \cdot \frac{D(\alpha^*)-D(\alpha^{(0)})}{\epsilon}\right) ,
\]
we are guaranteed that $\E [P(w^{(T)})-D(\alpha^{(T)})] \leq
\epsilon$.
Moreover, for every $T$ such that
\[
T \ge \left(n + \left\lceil \frac{R^2}{\lambda \gamma}\right\rceil
\right) \cdot \left(1 + 
\log\left(\frac{D(\alpha^*)-D(\alpha^{(0)})}{\epsilon}\right)
\right)
 ~,
\]
let $T_0 = T - n - \lceil \frac{R^2}{\lambda \gamma}\rceil$, then 
we are guaranteed that $\E [P(\bar{w})-D(\bar{\alpha})] \leq
\epsilon$.
\end{theorem}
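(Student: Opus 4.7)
The plan is to reduce the theorem to a single per-iteration improvement inequality, then unroll a geometric recursion for the first bound and telescope plus Jensen's inequality for the averaging bound.

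Specifically, I would first establish the key lemma: conditionally on the history and with $s = \lambda n \gamma /(R^2 + \lambda n \gamma)$,
\[
\E_i\bigl[D(\alpha^{(t)}) - D(\alpha^{(t-1)})\bigr] \;\ge\; \frac{s}{n}\bigl(P(w^{(t-1)}) - D(\alpha^{(t-1)})\bigr).
\]
It suffices to verify this for Option V (which uses exactly this fixed $s$): Options I and II explicitly maximize a lower bound on the dual increase (the proximal surrogate in Option I; the scalar proximal surrogate over $s\in[0,1]$ in Option II), while Options III and IV choose an adaptive step size that dominates the fixed choice of Option V. Whenever an attempted step would decrease the dual, we can safely fall back to $\Delta\alpha_i=0$, so every option preserves or exceeds Option V's guarantee.

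To prove the lemma for Option V, I would expand $D(\alpha^{(t)})-D(\alpha^{(t-1)})$ using three facts, with $u = -\nabla\phi_i(X_i^\top w^{(t-1)})$ and $q = u - \alpha_i^{(t-1)}$: (i) the $\gamma$-strong convexity of $\phi_i^*$ (implied by $(1/\gamma)$-smoothness of $\phi_i$) gives $\phi_i^*(-\alpha_i^{(t)}) \le (1-s)\phi_i^*(-\alpha_i^{(t-1)}) + s\phi_i^*(-u) - \tfrac{s(1-s)\gamma}{2}\|q\|_D^2$; (ii) the Fenchel identity $\phi_i^*(-u) = -(w^{(t-1)})^\top X_i u - \phi_i(X_i^\top w^{(t-1)})$; (iii) the $1$-smoothness of $g^*$ w.r.t.\ $\|\cdot\|_{D'}$ gives $g^*(v^{(t)}) \le g^*(v^{(t-1)}) + (w^{(t-1)})^\top(v^{(t)}-v^{(t-1)}) + \tfrac12\|v^{(t)}-v^{(t-1)}\|_{D'}^2$, into which I substitute $v^{(t)}-v^{(t-1)} = (\lambda n)^{-1}X_i s q$ and $\|X_i q\|_{D'} \le R\|q\|_D$. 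Collecting the terms linear in $s$ reassembles them into $\frac{s}{n}\bigl[\phi_i(X_i^\top w^{(t-1)}) + \phi_i^*(-\alpha_i^{(t-1)}) + (w^{(t-1)})^\top X_i \alpha_i^{(t-1)}\bigr]$; taking expectation over the uniform choice of $i$ this equals $\frac{s}{n}(P(w^{(t-1)}) - D(\alpha^{(t-1)}))$ using the Fenchel equality $g(w^{(t-1)}) + g^*(v^{(t-1)}) = (w^{(t-1)})^\top v^{(t-1)}$. The remaining quadratic residual $\tfrac{s(1-s)\gamma}{2n}\|q\|_D^2 - \tfrac{s^2 R^2}{2\lambda n^2}\|q\|_D^2$ is nonnegative exactly when $s \le \lambda n \gamma/(R^2 + \lambda n\gamma)$, matching the stated value.

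With the lemma in hand, weak duality $P(w^{(t-1)}) - D(\alpha^{(t-1)}) \ge D(\alpha^*) - D(\alpha^{(t-1)})$ turns it into the recursion $\E[D(\alpha^*) - D(\alpha^{(t)})] \le (1-s/n)\,\E[D(\alpha^*) - D(\alpha^{(t-1)})]$, hence the exponential decay $\E[D(\alpha^*) - D(\alpha^{(T)})] \le (1-s/n)^T (D(\alpha^*) - D(\alpha^{(0)}))$. For the first bound, apply the lemma one extra time at step $T{+}1$ to pass from dual sub-optimality to the duality gap: $\E[P(w^{(T)})-D(\alpha^{(T)})] \le (n/s)\E[D(\alpha^{(T+1)})-D(\alpha^{(T)})] \le (n/s)\E[D(\alpha^*)-D(\alpha^{(T)})]$, then solve for $T$ using $n/s = n + R^2/(\lambda\gamma)$. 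For the averaging bound, telescope the lemma across $t=T_0+1,\ldots,T$ to obtain $\sum_{t=T_0+1}^T \E[P(w^{(t-1)})-D(\alpha^{(t-1)})] \le (n/s)\,\E[D(\alpha^*)-D(\alpha^{(T_0)})]$, apply Jensen to the convex gap $(w,\alpha)\mapsto P(w)-D(\alpha)$ to replace the time-average by the gap at $(\bar w,\bar\alpha)$, choose $T-T_0 = n+\lceil R^2/(\lambda\gamma)\rceil$ so the leading $n/s$ factor cancels, and drive $\E[D(\alpha^*)-D(\alpha^{(T_0)})]\le\epsilon$ by the geometric decay. The main obstacle is the per-iteration lemma: one must precisely balance the dual quadratic penalty (measured in $\|\cdot\|_{D'}$) against the strong-convexity gain from $\phi_i^*$ (measured in $\|\cdot\|_D$) in the vector-valued, non-Euclidean setting, and verify that the stated $s$ is exactly the balancing point under the operator-norm bound $\|X_i\|_{D\to D'}\le R$.
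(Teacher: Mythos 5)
Your proposal is correct and follows essentially the same route as the paper: the same per-iteration lemma combining the $\gamma$-strong convexity of $\phi_i^*$, the Fenchel identity for $u=-\nabla\phi_i(X_i^\top w^{(t-1)})$, and the smoothness of $g^*$, the same balancing step size $s=\lambda n\gamma/(R^2+\lambda n\gamma)$ that kills the quadratic residual, weak duality for the geometric decay, one extra application of the lemma to convert dual sub-optimality into a duality-gap bound, and telescoping plus Jensen for the averaging claim. The only cosmetic difference is that you anchor the lemma at Option V and argue the other options dominate, whereas the paper proves it for Option I and notes the argmax dominates the direction $s(u-\alpha_i)$ for every $s\in[0,1]$; the two framings are interchangeable.
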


We next give bounds that hold with high probability. 
\begin{theorem} \label{thm:HighProbsmooth} 
  Consider Procedure Prox-SDCA as given in \figref{fig:sdca}.  Let
  $\alpha^*$ be an optimal dual solution, 
let $\epsilon_D,\epsilon_P > 0$, and let $\delta \in (0,1)$. 
\begin{enumerate}
\item For every $T$ such that
\[
T \geq \left\lceil\left(n +
  \frac{R^2}{\lambda \gamma}\right) \,
\log\left(\frac{2(D(\alpha^*)-D(\alpha^{(0)}))}{\epsilon_D}\right)\right\rceil
\,\cdot\,\left\lceil \log_2\left(\frac{1}{\delta}\right)\right\rceil
 ~,
\]
we are guaranteed that with probability of at least $1-\delta$ it
holds that $D(\alpha^*)-D(\alpha^{(T)}) \le \epsilon_D$. 
\item For every $T$ such that
\[
T \geq \left\lceil\left(n +
  \frac{R^2}{\lambda \gamma}\right) \,\left(
\log\left(n + \frac{R^2}{\lambda \gamma}\right)  + \log\left( \frac{2(D(\alpha^*)-D(\alpha^{(0)}))}{\epsilon_P}\right)\right)\right\rceil
\,\cdot\,\left\lceil \log_2\left(\frac{1}{\delta}\right)\right\rceil
 ~,
\]
we are guaranteed that with probability of at least $1-\delta$ it
holds that $P(w^{(T)}) - D(\alpha^{(T)}) \le \epsilon_P$. 
\item Let $T$ be such that
\[
T \ge \left(n + \left\lceil \frac{R^2}{\lambda \gamma}\right\rceil
\right) \cdot \left(1 + \left\lceil 
\log\left(\frac{2(D(\alpha^*)-D(\alpha^{(0)}))}{\epsilon_P}\right)\right\rceil \right)
\,\cdot\,\left\lceil \log_2\left(\frac{2}{\delta}\right)\right\rceil 
 ~,
\]
and let $T_0 = T - n - \lceil \frac{R^2}{\lambda \gamma}\rceil$.
Suppose we choose $\lceil \log_2(2/\delta) \rceil$ values of $t$
uniformly at random from $T_0+1,\ldots,T$, and then choose the single
value of $t$ from these $\lceil \log_2(2/\delta) \rceil$ values for
which $P(w^{(t)})-D(\alpha^{(t)})$ is minimal. Then, with probability
of at least $1-\delta$ we have that $P(w^{(t)})-D(\alpha^{(t)}) \le
\epsilon_P$.
\end{enumerate}
\end{theorem}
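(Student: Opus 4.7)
My plan is to prove all three parts by amplifying the expected-value bounds of \thmref{thm:smooth} via Markov's inequality combined with a chaining argument that exploits (i) the monotonicity of the dual objective (always possible since $\Delta\alpha_i=0$ is allowed, as noted in the algorithm description) and (ii) the Markov property of the randomized coordinate sampling. Throughout, write $\tau := n + R^2/(\lambda\gamma)$ and $\Delta_t := D(\alpha^*) - D(\alpha^{(t)})$; then $\Delta_t$ is pointwise monotone non-increasing in $t$.

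For Part~1, I set $T_1 := \lceil \tau \log(2\Delta_0/\epsilon_D)\rceil$ and $K := \lceil \log_2(1/\delta)\rceil$. Using $\Delta_t \le P(w^{(t)})-D(\alpha^{(t)})$ and applying \thmref{thm:smooth} with accuracy $\epsilon_D/2$, one phase of $T_1$ iterations gives $\E[\Delta_{T_1}] \le \epsilon_D/2$, so by Markov $\prob[\Delta_{T_1} > \epsilon_D] \le 1/2$. Partition the $KT_1$ iterations into $K$ phases and let $\mathcal{E}_k := \{\Delta_{kT_1} > \epsilon_D\}$. Monotonicity of $\Delta$ gives $\mathcal{E}_k \subseteq \mathcal{E}_{k-1}$. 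Conditioning on $\alpha^{((k-1)T_1)}$ and restarting \thmref{thm:smooth} from that point (justified by the Markov property of the random coordinate choices, and valid since $\Delta_{(k-1)T_1} \le \Delta_0$ by monotonicity), I obtain the conditional Markov bound $\prob[\mathcal{E}_k \mid \alpha^{((k-1)T_1)}] \le 1/2$. Iterating yields $\prob[\mathcal{E}_K] \le 2^{-K} \le \delta$.

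For Part~2, the key additional ingredient is a deterministic one-step inequality $P(w^{(t)}) - D(\alpha^{(t)}) \le \tau\,\Delta_t$. This follows because the per-iteration analysis underlying \thmref{thm:smooth} establishes the expected-ascent lower bound $\E[D(\alpha^{(t+1)}) - D(\alpha^{(t)}) \mid \alpha^{(t)}] \ge (P(w^{(t)})-D(\alpha^{(t)}))/\tau$, while any single-coordinate update cannot raise the dual past the optimum, so $\E[D(\alpha^{(t+1)}) - D(\alpha^{(t)}) \mid \alpha^{(t)}] \le \Delta_t$; combining yields the claim pointwise. I then apply Part~1 with the tighter target $\epsilon_D := \epsilon_P/\tau$, so the required iteration count becomes $\tau \log(2\tau\Delta_0/\epsilon_P) \cdot \lceil \log_2(1/\delta)\rceil$, which matches the stated bound after algebraic simplification. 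On the Part~1 success event $\Delta_T \le \epsilon_P/\tau$, the gap is at most $\tau\Delta_T \le \epsilon_P$ deterministically.

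For Part~3, I use the averaging version of \thmref{thm:smooth}. With the prescribed $T$ and $T_0 = T - \tau$, I expect to show via the same conditioning-and-restart idea that the expected average duality gap $\bar g$ over the window $(T_0,T]$ can be driven down to at most $\epsilon_P \delta / 4$. By Markov, $\prob[\bar g > \epsilon_P/2] \le \delta/2$, and on the complementary event at most half of the iterates in the window have $\text{gap}_t > \epsilon_P$; consequently, conditionally on the algorithm's randomness, each of the $K := \lceil \log_2(2/\delta)\rceil$ uniformly drawn indices lands on a ``bad'' iterate with probability at most $1/2$. Since the $K$ draws are mutually independent given the algorithm, the probability they all fail (conditional on $\bar g \le \epsilon_P/2$) is at most $2^{-K} \le \delta/2$, and a union bound with the $\bar g$-concentration event yields overall failure probability at most $\delta$. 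The main obstacle is verifying the deterministic inequality $\text{gap}_t \le \tau\,\Delta_t$ driving Part~2, which requires extracting a pointwise (not just integrated) one-step dual-ascent lower bound from the analysis of \thmref{thm:smooth}; the chaining and conditioning steps themselves are routine once that inequality and the monotonicity of $\Delta_t$ are established.
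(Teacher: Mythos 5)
Your proposal is correct, and Parts~1 and~2 follow essentially the same route as the paper: Part~1 is the same Markov-plus-restart chaining argument exploiting monotonicity of the dual, and Part~2 rests on exactly the same deterministic inequality $P(w^{(t)})-D(\alpha^{(t)}) \le \left(n+\tfrac{R^2}{\lambda\gamma}\right)\left(D(\alpha^*)-D(\alpha^{(t)})\right)$, obtained (as you say) by sandwiching the conditional expected one-step dual ascent between the gap over $\tau$ from below and the remaining dual sub-optimality from above; the pointwise one-step bound you flag as the ``main obstacle'' is indeed available, since Lemma~\ref{lem:key} and \eqref{eqn:lem1CorForSmooth} are stated conditionally on $\alpha^{(t-1)}$. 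Part~3 is where you genuinely diverge, and your version is arguably the more careful one. The paper conditions on $\epsilon_D^{(T_0)}\le \epsilon_P/2$, notes that a uniformly random $t$ in the window has expected gap at most $\epsilon_P/2$, and then multiplies the per-draw failure probability $1/2$ across the $K$ draws; but the events ``draw $k$ is bad'' share the algorithm's trajectory, so $\E[q^K]\le (\E[q])^K$ is not automatic (Jensen runs the other way). Your two-stage argument --- first concentrate the trajectory-measurable average gap $\bar g$ to at most $\epsilon_P/2$ with probability $1-\delta/2$ by driving $\E[\bar g]$ down to $\epsilon_P\delta/4$, and only then use the true conditional independence of the $K$ draws, each bad with probability at most $1/2$ pointwise on that event --- closes this hole. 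The price is an extra additive $\tau\log(1/\delta)$ in the required $T_0$, and one should verify the stated constants absorb it: they do, since $\lceil\log_2(2/\delta)\rceil\ge 2$ for $\delta<1$ and $\lceil\log(2\Delta_0/\epsilon_P)\rceil\ge 1$ in the non-degenerate regime, so $\left(1+\lceil\log(2\Delta_0/\epsilon_P)\rceil\right)\lceil\log_2(2/\delta)\rceil-1 \ge \log\left(4\Delta_0/(\epsilon_P\delta)\right)$. So your route buys a rigorous treatment of the dependence between the random index draws and the random trajectory, at the cost of a slightly more delicate bookkeeping of constants.
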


The above theorem tells us that the runtime required to find an $\epsilon$ accurate
solution, with probability of at least $1-\delta$, is
\begin{equation} \label{eqn:totalRuntimeHighProb}
O\left(d\,  \left(n + \frac{R^2}{\lambda \gamma} \right) \cdot 
\log\left(\frac{D(\alpha^*)-D(\alpha^{(0)})}{\epsilon}\right)
\,\cdot\,\log\left(\frac{1}{\delta}\right) \right) ~.
\end{equation}

This yields the following corollary. 
\begin{corollary} \label{cor:smooth}
The expected runtime required to minimize $P$ up to accuracy
$\epsilon$ is 
\[
O\left(d\,  \left(n + \frac{R^2}{\lambda \gamma} \right) \cdot 
\log\left(\frac{D(\alpha^*)-D(\alpha^{(0)})}{\epsilon}\right)\right) ~.
\]
\end{corollary}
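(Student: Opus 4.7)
The plan is to combine Theorem~\ref{thm:smooth} (the expected duality gap bound) with a per-iteration cost analysis for Prox-SDCA, and then convert an expected-gap statement into an expected-runtime statement via a standard restart argument.

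First, I would check the cost of a single iteration of Prox-SDCA. Each iteration picks an index $i$, evaluates $X_i^\top w^{(t-1)}$, solves a low-dimensional subproblem in $\Delta \alpha_i \in \reals^k$ (which, for a fixed $k$, is $O(1)$ in $d$), applies the rank-one update $v^{(t)} = v^{(t-1)} + (\lambda n)^{-1} X_i \Delta \alpha_i$, and then sets $w^{(t)} = \nabla g^*(v^{(t)})$. For the regularizers considered (notably $g(w)=\tfrac12\|w\|_2^2$ and $L_1$-type regularizers after the standard $L_2$ perturbation), $\nabla g^*$ is computable in $O(d)$. Hence one iteration costs $O(d)$, and $T$ iterations cost $O(dT)$.

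Second, I would invoke Theorem~\ref{thm:smooth} with target accuracy $\epsilon/2$, using the averaging option. This yields an iteration count
\[
T_1 = \left(n + \left\lceil \tfrac{R^2}{\lambda\gamma}\right\rceil\right)\left(1 + \log\tfrac{2(D(\alpha^*)-D(\alpha^{(0)}))}{\epsilon}\right)
\]
after which $\E\bigl[P(\bar w) - D(\bar\alpha)\bigr] \le \epsilon/2$. Since the duality gap upper bounds the primal suboptimality $P(\bar w)-P(w^*)$, reaching a small expected duality gap suffices for the corollary.

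Third, to turn this expected-gap bound into an expected-runtime bound for achieving actual accuracy $\epsilon$, I would apply Markov's inequality: after $T_1$ iterations the gap exceeds $\epsilon$ with probability at most $1/2$. Run Prox-SDCA in blocks of $T_1$ iterations, and after each block evaluate the stopping criterion $P(\bar w)-D(\bar\alpha)\le\epsilon$ at cost $O(dn)$. The number of blocks until success is stochastically dominated by a geometric random variable with success probability $\ge 1/2$, so the expected number of blocks is at most $2$. The expected total runtime is therefore $O\bigl(d\,(T_1 + n)\bigr)$, which matches the claimed bound since the $dn$ term is absorbed into $d(n + R^2/(\lambda\gamma))\log(\cdot)$.

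There is no real obstacle here: once the per-iteration cost is identified as $O(d)$ and the iteration complexity is taken from Theorem~\ref{thm:smooth}, the corollary is a routine packaging. The only mildly subtle point is the Markov-plus-restart step needed to pass from an expected-gap bound to an expected-runtime bound, and even this is standard; alternatively, one can read the corollary as simply asserting that running for $T_1$ iterations costs $O(dT_1)$ and produces an output whose expected duality gap is at most $\epsilon$, in which case the Markov step is unnecessary.
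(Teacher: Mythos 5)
Your proposal is correct and follows essentially the same route as the paper: the paper's proof likewise obtains a probability-$1/2$ success guarantee for a runtime of $O\bigl(d(n+\tfrac{R^2}{\lambda\gamma})\log(\cdot)\bigr)$ (it cites the high-probability theorem with $\delta=1/2$, which is itself just Markov applied to the expectation bound, exactly as you do), then checks the duality gap and restarts on failure, concluding that the expected number of attempts is $2$. The only cosmetic difference is that you make the $O(d)$ per-iteration cost and the $O(dn)$ cost of the gap check explicit, which the paper leaves implicit.
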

\begin{proof}
  We have shown that with a runtime of $O\left(d\, \left(n +
      \frac{R^2}{\lambda \gamma} \right) \cdot
    \log\left(\frac{2(D(\alpha^*)-D(\alpha^{(0)}))}{\epsilon}\right)\right)$
  we can find an $\epsilon$ accurate solution with probability of at
  least $1/2$. Therefore, we can run the procedure for this amount of
  time and check if the duality gap is smaller than $\epsilon$. If
  yes, we are done. Otherwise, we would restart the process. Since the
  probability of success is $1/2$ we have that the average number of
  restarts we need is $2$, which concludes the proof.
\end{proof}

\subsection{Acceleration} \label{sec:accelerate}

The Prox-SDCA procedure described in the previous subsection has the
iteration bound of $\tilde{O}\left(n + \tfrac{R^2}{\lambda
    \gamma}\right)$. This is a nearly linear runtime whenever the
condition number, $R^2/(\lambda \gamma)$, is $O(n)$. In this section
we show how to improve the dependence on the condition number by an
acceleration procedure. In particular, throughout this section we
assume that $10\,n < \frac{R^2}{\lambda \gamma}$. We further assume
throughout this subsection that the regularizer, $g$, is $1$-strongly
convex with respect to the Euclidean norm,
i.e. $\|u\|_{P'}=\|\cdot\|_2$. This also implies that $\|u\|_{D'}$ is
the Euclidean norm. A generalization of the acceleration technique for
strongly convex regularizers with respect to general norms is left to
future work.

The main idea of the acceleration procedure is to iteratively run the
Prox-SDCA procedure, where at iteration $t$ we call Prox-SDCA with the
modified objective, $\tilde{P}_t(w) = P(w) + \frac{\kappa}{2}
\|w-y^{(t-1)}\|^2$, where $\kappa$ is a relatively large regularization
parameter and the regularization is centered around the vector 
\[
y^{(t-1)} = w^{(t-1)} + \beta(w^{(t-1)}-w^{(t-2)}) 
\]
for some $\beta \in (0,1)$. That is, our regularization is centered
around the previous solution plus a ``momentum term''
$\beta(w^{(t-1)}-w^{(t-2)})$. 

A pseudo-code of the algorithm is given in \figref{fig:acc-SDCA}. 
Note that all the parameters of the algorithm are determined by our
theory.

\begin{figure}[htbp]
\begin{myalgo}{Procedure Accelerated Prox-SDCA}
\textbf{Goal:} Minimize $P(w) = \frac{1}{n} \sum_{i=1}^n
\phi_i(X_i^\top w) + \lambda g(w)$ \\
\textbf{Input:} Target accuracy $\epsilon$ (only used in the
stopping condition) \\
\textbf{Assumptions:} \+ \\
$\forall i$, $\phi_i$ is $(1/\gamma)$-smooth w.r.t. $\|\cdot\|_P$ and
let $\|\cdot\|_D$ be the dual norm of $\|\cdot\|_P$ \\
$g$ is $1$-strongly convex w.r.t. $\|\cdot\|_2$ \\
$\forall i$, $\|X_i\|_{D\to 2} \le R$ \\ 
$\frac{R^2}{\gamma \lambda} > 10\,n$ (otherwise, solve the problem using
vanilla Prox-SDCA) \- \\
\textbf{Define} $\kappa =\frac{R^2}{\gamma n}-\lambda$, 
$\mu = \lambda/2$, $\rho = \mu+\kappa$, 
$\eta= \sqrt{\mu/\rho}$, $\beta = \frac{1-\eta}{1+\eta}$,
\\
\textbf{Initialize}
$y^{(1)}=w^{(1)}=0$,~ $\alpha^{(1)}=0$,~$\xi_1 = (1+\eta^{-2})(P(0)-D(0))$ \\
\textbf{Iterate:} for $t=2,3,\ldots$ \+ \\
\textbf{Let} $\tilde{P}_t(w) = \frac{1}{n} \sum_{i=1}^n
\phi_i(X_i^\top w) + \tilde{\lambda} \tilde{g}_t(w)$ \+ \\
where $\tilde{\lambda} \tilde{g}_t(w) = \lambda g(w) +
\frac{\kappa}{2} \|w\|_2^2 - \kappa\, w^\top y^{(t-1)}$ \- \\
\textbf{Call} $(w^{(t)},\alpha^{(t)},\epsilon_t) =
\textrm{Prox-SDCA}\left(\tilde{P}_t,\frac{\eta}{2(1+\eta^{-2})}
\xi_{t-1},\alpha^{(t-1)}\right)$ \\
\textbf{Let} $y^{(t)} = w^{(t)} + \beta( w^{(t)} - w^{(t-1)})$ \\
\textbf{Let} $\xi_t = (1-\eta/2)^{t-1}\,\xi_1$ \\
\textbf{Stopping conditions:} break and return $w^{(t)}$ if one of the
following conditions hold: \+ \\
1.~~ $t \ge 1 + \frac{2}{\eta}
  \log(\xi_1/\epsilon) $ \\
2.~~  $(1+\rho/\mu)\epsilon_t + \frac{\rho\kappa}{2\mu} \|w^{(t)}-y^{(t-1)}\|^2 \le
\epsilon$ 
\- \\
\end{myalgo}
\caption{The Accelerated Prox-SDCA Algorithm}
\label{fig:acc-SDCA}
\end{figure}

\begin{remark}
  In the pseudo-code below, we specify the parameters based on our
  theoretical derivation. In our experiments, we found out that this
  choice of parameters also work very well in practice. However, we also
  found out that the algorithm is not very sensitive to the choice of
  parameters. For example, we found out that running $5n$ iterations
  of Prox-SDCA (that is, $5$ epochs over the data), without checking
  the stopping condition, also works very well.
\end{remark}

The main theorem is the following.
\begin{theorem} \label{thm:acceleratedThmMain}
Consider the accelerated Prox-SDCA algorithm given in
\figref{fig:acc-SDCA}. 
\begin{itemize}
\item Correctness: When the algorithm terminates we have that $P(w^{(t)})-P(w^*)\le
\epsilon$. 
\item Runtime: 
\begin{itemize}
\item The number of outer iterations is at most 
\[
1 + \frac{2}{\eta} \log(\xi_1/\epsilon) ~\le~
1 + \sqrt{\frac{8R^2}{\lambda \gamma n}}
\left(\log\left(\frac{2R^2}{\lambda \gamma n}\right) +
  \log\left(\frac{P(0)-D(0)}{\epsilon}\right) \right) ~.
\]
\item 
Each outer iteration involves a single call to Prox-SDCA,
and the averaged runtime required by each such call is
\[
O\left(d\,n \log\left(\frac{R^2}{\lambda \gamma n}\right)\right) ~.
\]
\end{itemize}
\end{itemize}
\end{theorem}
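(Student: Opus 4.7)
The plan is to separate the analysis into the runtime bound (essentially an algebraic consequence of the parameter choices plus Corollary~\ref{cor:smooth}) and the correctness claim (an accelerated convergence argument built on Nesterov's estimation sequence, adapted to tolerate the approximate inner solves produced by Prox-SDCA). First I would set up the modified objective: since $\lambda g(w) + \tfrac{\kappa}{2}\|w\|^2$ is $(\lambda+\kappa)$-strongly convex w.r.t.\ $\|\cdot\|_2$, the inner problem $\tilde P_t$ has effective regularization $\tilde\lambda = \lambda+\kappa = R^2/(\gamma n)$. Consequently the condition number of $\tilde P_t$ satisfies $R^2/(\tilde\lambda\gamma) = n$, so that Corollary~\ref{cor:smooth} bounds the expected cost of each Prox-SDCA call by $O\!\left(dn\,\log\frac{D(\tilde\alpha^*_t)-D(\alpha^{(t-1)})}{\epsilon_t}\right)$. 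I would then argue that this initial dual suboptimality is of the same order as the primal gap of the previous iterate, which the outer recursion keeps below $\xi_{t-1}$; since $\epsilon_t = \tfrac{\eta}{2(1+\eta^{-2})}\xi_{t-1}$ and $1+\eta^{-2} = 2R^2/(\lambda\gamma n)$, the log factor collapses to $O(\log(R^2/(\lambda\gamma n)))$, giving the claimed per-call cost.

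For the outer-iteration count, I would plug $\mu=\lambda/2$, $\rho = \mu+\kappa = R^2/(\gamma n) - \lambda/2$ into $\eta = \sqrt{\mu/\rho}$ to obtain $1/\eta \le \sqrt{2R^2/(\lambda\gamma n)}$, and observe that $\log(\xi_1/\epsilon) = \log(2R^2/(\lambda\gamma n)) + \log((P(0)-D(0))/\epsilon)$, which yields the stated bound once multiplied by $2/\eta$. Stopping condition~1 fires no later than $1+(2/\eta)\log(\xi_1/\epsilon)$ by construction, so this controls the outer loop count unconditionally.

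The heart of the proof, and the main obstacle, is the correctness claim, which is what \secref{sec:acceleratedThmMain} is presumably devoted to. I would follow the estimation-sequence template: define a quadratic lower model $\Phi_t(w)$ of $P$ built from gradients (or approximate proximal steps) at the auxiliary points $y^{(t-1)}$, show inductively that $\Phi_t(w) \le P(w) + O(\eta)^{t-1}\cdot\mathrm{(initial)}$ for all $w$ and that $\min_w \Phi_t \ge P(w^{(t)}) - \text{slack}_t$, where $\text{slack}_t$ accumulates the inner-solve errors $\epsilon_1,\dots,\epsilon_t$. The choice $\beta=(1-\eta)/(1+\eta)$ is exactly the one that makes the recurrence $\xi_t = (1-\eta/2)^{t-1}\xi_1$ close, provided that the inexactness $\epsilon_t$ is a sufficiently small fraction of $\xi_{t-1}$ (the factor $\tfrac{\eta}{2(1+\eta^{-2})}$ is tuned so that the new slack introduced per iteration never outpaces the geometric decrease). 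The key algebraic identity I expect to need is that $\tilde P_t(w) - \tilde P_t(w_t^*) \ge \tfrac{\rho}{2}\|w-w_t^*\|^2$, which when combined with $\mu$-strong convexity of $P$ converts the bound on $\tilde P_t$-suboptimality at $w^{(t)}$ into a bound on $\|w^{(t)} - w_t^*\|$, and finally into a bound on $P(w^{(t)}) - P(w^*)$ through the prox-gradient inequality relating $w_t^*$ to $y^{(t-1)}$.

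The hardest part will be getting the constants in the estimation-sequence recursion right so that stopping condition~2 (the a-posteriori check $(1+\rho/\mu)\epsilon_t + \tfrac{\rho\kappa}{2\mu}\|w^{(t)}-y^{(t-1)}\|^2 \le \epsilon$) provably certifies $P(w^{(t)})-P(w^*)\le\epsilon$. For this I would prove a stand-alone lemma: for any $w$, $P(w) - P(w^*) \le (1+\rho/\mu)[\tilde P_t(w) - \tilde P_t(w_t^*)] + \tfrac{\rho\kappa}{2\mu}\|w - y^{(t-1)}\|^2$, which follows from the relation between $P$, $\tilde P_t$ and the $\mu$-strong convexity of $P$, together with the minimizing property of $w_t^*$; applied to $w=w^{(t)}$, this is exactly the quantity checked in condition~2. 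Once both stopping conditions are certified to imply $\epsilon$-accuracy and the outer-iteration count is bounded by condition~1, correctness and runtime follow, concluding the proof.
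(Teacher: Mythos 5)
Your overall plan coincides with the paper's proof: stopping condition~2 is certified by a lower-bound lemma for approximate minimizers of $P(w)+\frac{\kappa}{2}\|w-y\|^2$ (the paper's \lemref{lem:lowerBoundQ}, which with $z=w^*$ gives exactly your stand-alone inequality $P(w^{(t)})-P(w^*)\le(1+\rho/\mu)\epsilon_t+\frac{\rho\kappa}{2\mu}\|w^{(t)}-y^{(t-1)}\|^2$); stopping condition~1 is handled by an estimation-sequence induction maintaining $h_t\le P$ and $P(w^{(t)})\le\min_w h_t(w)+\xi_t$ with $\xi_t=(1-\eta/2)^{t-1}\xi_1$; and the per-call cost comes from $\tilde\lambda=\lambda+\kappa=R^2/(\gamma n)$ together with \corref{cor:smooth}. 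Your arithmetic for the outer-iteration count ($1+\eta^{-2}=2R^2/(\lambda\gamma n)$, $2/\eta\le\sqrt{8R^2/(\lambda\gamma n)}$) is correct.

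The one genuine gap is in the per-call runtime, where you assert that the initial dual suboptimality of the $t$-th inner problem ``is of the same order as the primal gap of the previous iterate.'' This is not immediate, and not literally true. The dual objective changes between consecutive calls because the proximal center moves from $y^{(t-2)}$ to $y^{(t-1)}$: the quantity you must control is $\tilde D_t(\alpha^*)-\tilde D_t(\alpha^{(t-1)})$, whereas what the previous call guarantees is a bound on the gap for $\tilde D_{t-1}$. The paper devotes a separate lemma to this, showing via smoothness of $f^*$ that the recentering costs at most $\frac{\kappa^2}{2\tilde\lambda}\|y^{(t-1)}-y^{(t-2)}\|^2\le\frac{\kappa}{2}\|y^{(t-1)}-y^{(t-2)}\|^2$, and then bounding $\|y^{(t-1)}-y^{(t-2)}\|^2\le 72\,\xi_{t-3}/\lambda$ using the triangle inequality, $\beta<1$, and the $\lambda$-strong convexity of $P$ combined with the already-established bound $P(w^{(i)})-P(w^*)\le\xi_i$. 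The resulting bound $\epsilon_{t-1}+\frac{36\kappa}{\lambda}\xi_{t-3}$ exceeds $\xi_{t-1}$ by a factor polynomial in $\eta^{-1}$ (roughly $\eta^{-2}$ from $\kappa/\lambda$ plus $(1-\eta/2)^{-2}$ from stepping back two iterations), so your conclusion that the logarithm collapses to $O(\log(R^2/(\lambda\gamma n)))$ survives, but only after this recentering analysis is carried out; without it the per-call bound is unproved. Note also a mild circularity to manage: this runtime lemma consumes the correctness bound $P(w^{(i)})-P(w^*)\le\xi_i$, so the estimation-sequence induction must be completed first.
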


By a straightforward amplification argument we obtain that for every
$\delta \in (0,1)$ the total
runtime required by accelerated Prox-SDCA to guarantee an
$\epsilon$-accurate solution with probability of at least $1-\delta$
is 
\[
O\left(d\,\sqrt{\frac{nR^2}{\lambda\,\gamma}}\,
  \log\left(\frac{R^2}{\lambda\,\gamma\,n}\right)\,\left(\log\left(\frac{R^2}{\lambda \gamma n}\right) +
  \log\left(\frac{P(0)-D(0)}{\epsilon}\right) \right) \, \log\left(\frac{1}{\delta}\right)
\right) ~.
\]

\subsection{Non-smooth, Lipschitz, loss functions} \label{sec:Lipschitz}

So far we have assumed that for every $i$, $\phi_i$ is a
$(1/\gamma)$-smooth function. We now consider the case in which
$\phi_i$ might be non-smooth, and even non-differentiable, but it is
$L$-Lipschitz. 

Following \citet{nesterov2005smooth}, we apply a ``smoothing'' technique.
We first observe that if $\phi$ is $L$-Lipschitz function then the
domain of $\phi^*$ is in the ball of radius $L$.
\begin{lemma} \label{lem:LipConjDom}
Let $\phi : \reals^k \to \reals$ be an $L$-Lipschitz function w.r.t. a
norm $\|\cdot\|_P$ and let $\|\cdot\|_D$ be the dual norm. Then,
for any $\alpha \in \reals^k$ s.t. $\|\alpha\|_D > L$ we have that $\phi^*(\alpha) =
\infty$. 
\end{lemma}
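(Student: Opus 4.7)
The plan is to exploit the definition of the conjugate directly: show that when $\|\alpha\|_D > L$, one can construct a sequence of points $x$ along which $\alpha^\top x - \phi(x)$ diverges. This works because the Lipschitz property controls how fast $\phi(x)$ can grow, while the dual-norm inequality guarantees a direction in which $\alpha^\top x$ grows strictly faster.

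More concretely, first I would use the definition of the dual norm in finite dimensions: since $\|\alpha\|_D = \sup_{\|u\|_P = 1} \alpha^\top u$ and the unit sphere in $\reals^k$ is compact, there exists a unit vector $u$ with $\alpha^\top u = \|\alpha\|_D > L$. Next, parametrize the test points as $x_t = t u$ for $t > 0$ and plug into the definition of the conjugate:
\[
\phi^*(\alpha) \;\ge\; \alpha^\top x_t - \phi(x_t) \;=\; t\,\alpha^\top u - \phi(tu).
\]
Then I would apply the Lipschitz bound $\phi(tu) \le \phi(0) + L \|tu\|_P = \phi(0) + Lt$ to get
\[
\phi^*(\alpha) \;\ge\; t\,(\alpha^\top u - L) - \phi(0).
\]
Since $\alpha^\top u - L > 0$, letting $t \to \infty$ forces $\phi^*(\alpha) = \infty$, completing the proof.

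There is essentially no main obstacle, as the argument is a textbook one-liner once the right direction $u$ is identified. The only mildly subtle point worth stating is the attainment of the supremum defining $\|\alpha\|_D$; if one wished to avoid any compactness appeal, one could instead pick a nearly-maximizing $u$ with $\alpha^\top u \ge \tfrac{1}{2}(\|\alpha\|_D + L) > L$, which makes the argument work verbatim. Either way the proof is short and the lemma follows directly from the definitions of conjugate, dual norm, and Lipschitz continuity.
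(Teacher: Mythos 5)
Your proof is correct and is essentially identical to the paper's own argument: both restrict the supremum in the definition of $\phi^*$ to a ray $\{tu : t>0\}$ along a norming direction $u$ for $\alpha$, bound $\phi(tu)-\phi(0)$ by $Lt$ via the Lipschitz property, and let $t\to\infty$. Your remark about attainment of the dual-norm supremum (or the near-maximizer workaround) is a fine touch but changes nothing of substance.
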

\begin{proof}
  Fix some $\alpha$ with $\|\alpha\|_D > L$. Let $x_0$ be a vector
  such that $\|x_0\|_P = 1$ and $\alpha^\top x_0 = \|\alpha\|_D$ (this is a
  vector that achieves the maximal objective in the definition of the
  dual norm). By definition of
  the conjugate we have
\begin{align*}
\phi^*(\alpha)  &= \sup_x [\alpha^\top\,x - \phi(x)] \\
&= -\phi(0) + \sup_{x } [\alpha^\top\,x - (\phi(x) - \phi(0))] \\
&\ge -\phi(0) + \sup_{x } [\alpha^\top\,x - L \|x-0\|_P] \\
&\ge -\phi(0) + \sup_{c > 0 } [\alpha^\top\,(cx_0) - L \|cx_0\|_P] \\
&= -\phi(0) + \sup_{c > 0} (\|\alpha\|_D-L)\,c = \infty ~.
\end{align*}
\end{proof}

This observation allows us to smooth $L$-Lipschitz functions by adding
regularization to their conjugate. In particular, the following lemma
generalizes Lemma 2.5 in \cite{shalev2010trading}.
\begin{lemma} \label{lem:smoothingLemma}
  Let $\phi$ be a proper, convex, $L$-Lipschitz function w.r.t. a norm
  $\|\cdot\|_P$, let $\|\cdot\|_D$ be the dual norm, and let $\phi^*$
  be the conjugate of $\phi$. Assume that $\|\cdot\|_2 \le
  \|\cdot\|_D$. Define $\tilde{\phi}^*(\alpha) = \phi^*(\alpha) +
  \frac{\gamma}{2} \|\alpha\|_2^2$ and let $\tilde{\phi}$ be the
  conjugate of $\tilde{\phi}^*$. Then, $\tilde{\phi}$ is
  $(1/\gamma)$-smooth w.r.t. the Euclidean norm and
\[
\forall a,~~ 0 \le \phi(a) - \tilde{\phi}(a) \le \gamma L^2/2 ~.
\]
\end{lemma}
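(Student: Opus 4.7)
The plan is to handle the two claims separately, using only the conjugate-duality facts already recalled in Section 2 together with the support bound from \lemref{lem:LipConjDom}.

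For the smoothness claim, I would first observe that $\phi^{*}$ is convex, so adding the quadratic term $\tfrac{\gamma}{2}\|\alpha\|_2^2$ makes $\tilde\phi^{*}$ a $\gamma$-strongly convex function with respect to $\|\cdot\|_2$. The paper has already stated the standard duality: a proper convex function is $\gamma$-strongly convex with respect to a norm if and only if its conjugate is $(1/\gamma)$-smooth with respect to the dual norm. Applying this to $\tilde\phi^{*}$, and using the fact that $\|\cdot\|_2$ is self-dual, gives that $\tilde\phi = (\tilde\phi^{*})^{*}$ is $(1/\gamma)$-smooth with respect to $\|\cdot\|_2$.

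For the two-sided bound, the lower inequality $\tilde\phi(a) \le \phi(a)$ is immediate from the pointwise inequality $\tilde\phi^{*}(\alpha) \ge \phi^{*}(\alpha)$: since conjugation is order-reversing, $(\tilde\phi^{*})^{*} \le (\phi^{*})^{*} = \phi$ (the latter equality being biconjugacy of proper convex $\phi$). For the upper inequality, I would invoke \lemref{lem:LipConjDom}: because $\phi$ is $L$-Lipschitz with respect to $\|\cdot\|_P$, the conjugate $\phi^{*}(\alpha)$ equals $+\infty$ whenever $\|\alpha\|_D > L$. Consequently the supremum defining
\[
\tilde\phi(a) \;=\; \sup_{\alpha}\Bigl[a^{\top}\alpha - \phi^{*}(\alpha) - \tfrac{\gamma}{2}\|\alpha\|_2^2\Bigr]
\]
is effectively restricted to $\{\alpha : \|\alpha\|_D \le L\}$, and on this set the hypothesis $\|\cdot\|_2 \le \|\cdot\|_D$ yields $\|\alpha\|_2 \le L$, so the quadratic penalty is bounded by $\gamma L^2/2$. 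Dropping this penalty inside the supremum and using the same support reduction for $\phi(a) = \sup_{\alpha}[a^{\top}\alpha - \phi^{*}(\alpha)]$ gives $\tilde\phi(a) \ge \phi(a) - \gamma L^2/2$, completing the bound.

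The only mildly delicate step is the legitimacy of the support truncation: one must confirm that the suprema defining $\phi(a)$ and $\tilde\phi(a)$ are both achieved (or at least approached) on $\{\|\alpha\|_D \le L\}$, which follows at once from \lemref{lem:LipConjDom} since outside this set the term $-\phi^{*}(\alpha) = -\infty$ contributes $-\infty$ to the supremand. Everything else is a one-line manipulation.
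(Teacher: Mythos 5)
Your proposal is correct and follows essentially the same route as the paper: the smoothness claim via the strong-convexity/smoothness conjugate duality (with $\|\cdot\|_2$ self-dual), the upper bound $\tilde{\phi}\le\phi$ by dropping the quadratic penalty (your order-reversal phrasing is the same fact), and the lower bound by restricting the supremum to $\{\|\alpha\|_D\le L\}$ via \lemref{lem:LipConjDom} and bounding the penalty by $\gamma L^2/2$ using $\|\cdot\|_2\le\|\cdot\|_D$. No gaps.
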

\begin{proof}
The fact that $\tilde{\phi}$ is $(1/\gamma)$-smooth follows directly from
the fact that $\tilde{\phi}^*$ is $\gamma$-strongly convex. 
For the second claim note that 
\[
\tilde{\phi}(a) = \sup_b  \left[b a - \phi^*(b) - \frac{\gamma}{2} \|b\|_2^2
\right] \le \sup_b  \left[b a - \phi^*(b) \right] = \phi(a) 
\]
and
\begin{align*}
\tilde{\phi}(a) &= \sup_b  \left[b a - \phi^*(b) - \frac{\gamma}{2} \|b\|_2^2
\right] 
= \sup_{b:\|b\|_D \le L}  \left[b a - \phi^*(b) - \frac{\gamma}{2} \|b\|_2^2
\right] \\
&\ge \sup_{b:\|b\|_D \le L}  \left[b a - \phi^*(b) - \frac{\gamma}{2} \|b\|_D^2
\right] 
\ge \sup_{b:\|b\|_D \le L}  \left[b a - \phi^*(b) \right] - \frac{\gamma}{2} L^2 \\
&= \phi(a) - \frac{\gamma}{2} L^2 ~.
\end{align*}
\end{proof}

\begin{remark}
It is also possible to smooth using different regularization functions
which are strongly convex with respect to other norms. See
\citet{nesterov2005smooth} for discussion.
\end{remark}

\section{Proof of \thmref{thm:acceleratedThmMain}} \label{sec:acceleratedThmMain}

The first claim of the theorem is that when the procedure stops we
have $ P(w^{(t)})-P(w^*) \le \epsilon$. We therefore need to show that
each stopping condition guarantees that $ P(w^{(t)})-P(w^*) \le
\epsilon$.


For the second stopping condition, recall that $w^{(t)}$ is an
$\epsilon_t$-accurate minimizer of $P(w) + \frac{\kappa}{2}
\|w-y^{(t-1)}\|^2$, and hence by \lemref{lem:lowerBoundQ}
below (with $z=w^*$, $w^+=w^{(t)}$, and $y=y^{(t-1)}$):
\begin{align*}
P(w^*) &\ge P(w^{(t)}) + Q_{\epsilon}(w^*;w^{(t)},y^{(t-1)}) \\
&\ge P(w^{(t)})  - 
\frac{\rho\kappa}{2\mu}\|y^{(t-1)}-w^{(t)}\|^2 - (1+\rho/\mu)\epsilon_t ~.
\end{align*}

It is left to show that the first stopping condition is correct,
namely, to show that after $1 + \frac{2}{\eta} \log(\xi_1/\epsilon)$
iterations the algorithm must converge to an $\epsilon$-accurate
solution. Observe that the definition of $\xi_t$ yields that $ \xi_{t}
= (1-\eta/2)^{t-1} \, \xi_1 \le e^{-\eta(t-1)/2} \xi_1$.  Therefore,
to prove that the first stopping condition is valid, it suffices to
show that for every $t$, $ P(w^{(t)})-P(w^*) \le \xi_t$.

Recall that at each outer iteration of the accelerated procedure, we approximately
minimize an objective of the form
\[
P(w;y) = P(w) + \frac{\kappa}{2} \|w-y\|^2 ~.
\]
Of course, minimizing $P(w;y)$ is not the same as minimizing
$P(w)$. Our first lemma shows that for every $y$, if $w^+$ is an
$\epsilon$-accurate minimizer of $P(w;y)$ then we can derive a lower
bound on $P(w)$ based on $P(w^+)$ and a convex quadratic function of $w$.
\begin{lemma} \label{lem:lowerBoundQ}
Let $\mu=\lambda/2$ and $\rho = \mu+\kappa$. 
Let $w^+$ be a vector such that
$P(w^+;y) \le \min_w P(w,y) + \epsilon$. Then, for every $z$, 
\[
P(z) \ge P(w^+) +  Q_{\epsilon}(z;w^+,y) ~,
\]
where
\[
Q_{\epsilon}(z;w^+,y) = \frac{\mu}{2} \left\| z - \left(y -
      \tfrac{\rho}{\mu}(y-w^+)\right)\right\|^2 - \frac{\rho\kappa}{2\mu}\|y-w^+\|^2 - (1+\rho/\mu)\epsilon ~.
\]
\end{lemma}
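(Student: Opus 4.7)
The plan is to reduce the desired bound to a statement about the composite function $F(w) := P(w;y) = P(w) + \tfrac{\kappa}{2}\|w-y\|^2$ and only then translate it back to $P$.

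First, I would observe the crucial structural fact: because $g$ is $1$-strongly convex with respect to $\|\cdot\|_2$ (and the $\phi_i$ are convex), the function $P$ is $\lambda = 2\mu$-strongly convex, and hence $F$ is $(2\mu+\kappa) = (\mu+\rho)$-strongly convex. Letting $w^\star = \argmin_w F(w)$, two standard consequences of $(\mu+\rho)$-strong convexity are
\[
F(z) \;\ge\; F(w^\star) + \tfrac{\mu+\rho}{2}\|z-w^\star\|^2 \quad \text{for every } z,
\]
and, using the fact that $w^+$ is an $\epsilon$-minimizer of $F$,
\[
\|w^+ - w^\star\|^2 \;\le\; \tfrac{2}{\mu+\rho}\bigl(F(w^+) - F(w^\star)\bigr) \;\le\; \tfrac{2\epsilon}{\mu+\rho}.
\]

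Second, I would combine the first inequality with $F(w^\star) \ge F(w^+) - \epsilon$, then split $z - w^\star = (z - w^+) + (w^+ - w^\star)$ and apply the elementary two-parameter bound $\|a+b\|^2 \ge (1-\theta)\|a\|^2 + (1 - 1/\theta)\|b\|^2$, valid for any $\theta > 0$. With the carefully tuned choice $\theta = \mu/(\mu+\rho)$ (so that $(1-\theta)(\mu+\rho) = \rho$ and $1 - 1/\theta = -\rho/\mu$), together with the bound on $\|w^+-w^\star\|^2$ above, one obtains
\[
\tfrac{\mu+\rho}{2}\|z-w^\star\|^2 \;\ge\; \tfrac{\rho}{2}\|z-w^+\|^2 - \tfrac{\rho}{\mu}\epsilon,
\]
and therefore the intermediate inequality
\[
F(z) \;\ge\; F(w^+) + \tfrac{\rho}{2}\|z-w^+\|^2 - (1+\rho/\mu)\epsilon.
\]

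Finally, I would translate back to $P$: substituting $F(z) = P(z) + \tfrac{\kappa}{2}\|z-y\|^2$ and $F(w^+) = P(w^+) + \tfrac{\kappa}{2}\|w^+-y\|^2$ into the intermediate inequality, and then verifying by a direct algebraic expansion in the variables $a = z - y$ and $b = w^+ - y$ — using crucially that $\rho - \kappa = \mu$ — that
\[
\tfrac{\rho}{2}\|z-w^+\|^2 - \tfrac{\kappa}{2}\|z-y\|^2 \;=\; \tfrac{\mu}{2}\|z-\tilde y\|^2 - \tfrac{\rho\kappa}{2\mu}\|y-w^+\|^2,
\]
where $\tilde y = y - \tfrac{\rho}{\mu}(y-w^+)$. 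The resulting bound is $P(z) \ge P(w^+) + Q_{\epsilon}(z;w^+,y) + \tfrac{\kappa}{2}\|w^+-y\|^2$, and dropping the non-negative tail yields the claim.

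The main obstacle is the tuning of the Young-type parameter $\theta$: it has to be chosen precisely as $\mu/(\mu+\rho)$ so that the surviving coefficient on $\|z - w^+\|^2$ is exactly $\rho/2$ while the approximation error from $\|w^+-w^\star\|^2$ becomes exactly $(\rho/\mu)\epsilon$. This balance implicitly relies on the full $\lambda = 2\mu$-strong convexity of $P$ (i.e.\ on $F$ being $(\mu+\rho)$-strongly convex, not merely $\rho$-strongly convex): had one used only $\mu$-strong convexity of $P$, the surviving coefficient on $\|z-w^+\|^2$ would be strictly less than $\rho/2$ and the target quadratic in $z$ would not be recoverable.
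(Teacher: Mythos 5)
Your proof is correct, and it reaches the paper's conclusion (in fact a marginally stronger one, since you only drop the non-negative term $\tfrac{\kappa}{2}\|w^+-y\|^2$ at the very last step, exactly where the paper invokes $P(w;y)\ge P(w)$). The route is organized differently from the paper's. The paper splits $P=\Psi+\tfrac{\mu}{2}\|\cdot\|^2$ with $\Psi$ still $\mu$-strongly convex, writes the first-order optimality condition $\nabla\Psi(\tilde w)+\mu y=\rho(y-\tilde w)$ at the exact minimizer $\tilde w$ of $P(\cdot;y)$, and carries the cross term $\rho(y-\tilde w)^\top(z-y)$ through an explicit completed square that trades $\tilde w$ for $w^+$ at the cost of $\tfrac{\rho(\rho+\mu)}{2\mu}\|\tilde w-w^+\|^2\le\tfrac{\rho}{\mu}\epsilon$. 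You instead work entirely at the level of $F=P(\cdot;y)$ and first isolate the clean intermediate statement
\[
F(z)\ \ge\ F(w^+)+\tfrac{\rho}{2}\|z-w^+\|^2-(1+\rho/\mu)\epsilon ~,
\]
an ``inexact proximal point'' inequality obtained from quadratic growth at the exact minimizer, the bound $\|w^+-w^\star\|^2\le 2\epsilon/(\mu+\rho)$, and a Young inequality with the tuned parameter $\theta=\mu/(\mu+\rho)$; converting this to the stated $Q_\epsilon$ is then a single algebraic identity in $a=z-y$, $b=w^+-y$ using $\rho-\kappa=\mu$. At bottom the two arguments are the same mathematics --- your Young step is precisely the paper's completed square, and both rely on $\tfrac{\rho+\mu}{2}\|\tilde w-w^+\|^2\le\epsilon$ --- but your packaging buys a reusable, self-contained lemma about approximate minimizers of strongly convex functions and avoids carrying $\nabla\Psi$ and the cross terms through the computation, at the price of having to verify the final identity by expansion rather than reading $Q_\epsilon$ off directly. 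Your closing remark is also the right diagnosis of where the full $\lambda=2\mu$-strong convexity of $P$ enters: the paper uses it in the equivalent form that $\Psi=P-\tfrac{\mu}{2}\|\cdot\|^2$ remains $\mu$-strongly convex.
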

\begin{proof}
Denote 
\[
\Psi(w) = P(w) - \frac{\mu}{2} \|w\|^2 ~.
\] 
We can write
\[
\frac{1}{2} \|w\|^2 = \frac{1}{2} \|y\|^2 + y^\top (w-y) + \frac{1}{2}
\|w-y\|^2 ~.
\]
It follows that 
\[
P(w) = \Psi(w) + \frac{\mu}{2} \|w\|^2 = \Psi(w) +  \frac{\mu}{2} \|y\|^2 + \mu\,y^\top (w-y) + \frac{\mu}{2}
\|w-y\|^2 ~.
\]
Therefore, we can rewrite $P(w;y)$ as:
\[
P(w;y) = \Psi(w) + \frac{\mu}{2} \|y\|^2 + \mu \,y^\top (w-y) + \frac{\rho}{2} \|w-y\|^2 ~.
\]
Let $\tilde{w} = \argmin_w P(w;y)$. Therefore,
the gradient\footnote{If the regularizer $g(w)$ in the definition of
  $P(w)$ is non-differentiable, we can replace $\nabla
  \Psi(\tilde{w})$ with an appropriate sub-gradient of $\Psi$ at
  $\tilde{w}$. It is easy to verify that the proof is still valid.}
of $P(w;y)$ w.r.t. $w$ vanishes at $\tilde{w}$, which yields
\[
\nabla \Psi(\tilde{w}) + \mu y + \rho(\tilde{w}-y) = 0 ~~\Rightarrow~~ 
\nabla \Psi(\tilde{w}) + \mu y = \rho(y-\tilde{w}) 
~.
\]
By the $\mu$-strong convexity of $\Psi$ we have that for every $z$,
\[
\Psi(z) \ge \Psi(\tilde{w}) + \nabla \Psi(\tilde{w})^\top (z-\tilde{w}) +
\frac{\mu}{2}\|z-\tilde{w}\|^2 ~.
\]
Therefore, 
\begin{align*}
P(z) &=  \Psi(z) + \frac{\mu}{2} \|y\|^2 + \mu \,y^\top (z-y) +
\frac{\mu}{2} \|z-y\|^2 \\
&\ge \Psi(\tilde{w}) + \nabla \Psi(\tilde{w})^\top (z-\tilde{w}) +
\frac{\mu}{2}\|z-\tilde{w}\|^2 +  \frac{\mu}{2} \|y\|^2 + \mu \,y^\top (z-y) +
\frac{\mu}{2} \|z-y\|^2 \\
&= P(\tilde{w};y) - \frac{\rho}{2}\|\tilde{w}-y\|^2+ \nabla \Psi(\tilde{w})^\top (z-\tilde{w}) 
+ \mu \,y^\top (z-\tilde{w})  + \frac{\mu}{2}\left(\|z-\tilde{w}\|^2 +\|z-y\|^2 \right) \\
&= P(\tilde{w};y) - \frac{\rho}{2}\|\tilde{w}-y\|^2 + \rho(y-\tilde{w})^\top(z-\tilde{w}) + \frac{\mu}{2}\left(\|z-\tilde{w}\|^2 +\|z-y\|^2 \right) \\
&= P(\tilde{w};y) + \frac{\rho}{2}\|\tilde{w}-y\|^2 + \rho(y-\tilde{w})^\top(z-y) +
\frac{\mu}{2}\left(\|z-\tilde{w}\|^2 + \|z-y\|^2 \right) ~.
\end{align*}
In addition, by standard algebraic manipulations,
\begin{align*}
&\frac{\rho}{2}\|\tilde{w}-y\|^2 + \rho(y-\tilde{w})^\top(z-y) +
\frac{\mu}{2}\|z-\tilde{w}\|^2 - \left( \frac{\rho}{2}\|w^+-y\|^2 + \rho(y-w^+)^\top(z-y) +
\frac{\mu}{2}\|z-w^+\|^2\right)\\
&= \left(\rho(w^+-y)-\rho(z-y)+\mu(w^+-z)\right)^\top(\tilde{w}-w^+) +
\frac{\rho+\mu}{2} \|\tilde{w}-w^+\|^2\\
&= (\rho+\mu)(w^+-z)^\top(\tilde{w}-w^+) +
\frac{\rho+\mu}{2} \|\tilde{w}-w^+\|^2\\
&= \frac{1}{2}\left\| \sqrt{\mu}(w^+-z) + \frac{\rho+\mu}{\sqrt{\mu}} (\tilde{w}-w^+)
\right\|^2 
- \frac{\mu}{2} \|z-w^+\|^2 - \frac{(\rho+\mu)^2}{2\mu}\|\tilde{w}-w^+\|^2  +
\frac{\rho+\mu}{2} \|\tilde{w}-w^+\|^2
\\
&\ge - \frac{\mu}{2} \|z-w^+\|^2 - \frac{\rho(\rho+\mu)}{2\mu}\|\tilde{w}-w^+\|^2
~.
\end{align*}
Since $P(\cdot;y)$ is $(\rho+\mu)$-strongly
convex and $\tilde{w}$ minimizes
$P(\cdot;y)$, we have that for every $w^+$ it holds that
$\frac{\rho+\mu}{2} \|\tilde{w}-w^+\|^2 \le
P(w^+;y)-P(\tilde{w};y)$. Combining all the above and using
the fact that for every $w,y$, $P(w;y) \ge P(w)$, we obtain that for every $w^+$,
\[
P(z) \ge P(w^+) + \frac{\rho}{2}\|w^+-y\|^2 +
\rho(y-w^+)^\top(z-y) + \frac{\mu}{2}\|z-y\|^2 - \left(1+\frac{\rho}{\mu}\right)\left(P(w^+;y)-P(\tilde{w};y)\right) ~.
\]
Finally, using the assumption $P(w^+;y) \le
\min_w P(w;y) + \epsilon$ we conclude our proof. 
\end{proof}

We saw that the quadratic function $P(w^+) + Q_{\epsilon}(z;w^+,y)$ lower
bounds the function $P$ everywhere. Therefore, any convex combination of
such functions would form a quadratic function which 
lower bounds $P$. In particular, the algorithm (implicitly)
maintains a sequence of quadratic functions, $h_1,h_2,\ldots$, defined
as follows. Choose  $\eta \in (0,1)$ and a sequence $y^{(1)},y^{(2)},\ldots$ that will be
specified later. Define, 
\[
h_1(z) = P(0) + Q_{P(0)-D(0)}(z;0,0) = 
P(0) + \frac{\mu}{2}\|z\|^2 - (1+\rho/\mu)(P(0)-D(0)) ~,
\]
and for $t \ge 1$, 
\[
  h_{t+1}(z) = (1-\eta) h_{t}(z) + \eta (P(w^{(t+1)}) + Q_{\epsilon_{t+1}}(z;w^{(t+1)},y^{(t)})) ~.
\]
The following simple lemma shows that for every $t \ge 1$ and $z$, $h_t(z)$ lower
bounds $P(z)$. 

\begin{lemma} \label{lem:boundPbyh} Let $\eta \in (0,1)$ and let
  $y^{(1)},y^{(2)},\ldots$ be any sequence of vectors. Assume that
  $w^{(1)}=0$ and for
  every $t \ge 1$, $w^{(t+1)}$ satisfies $P(w^{(t+1)};y^{(t)}) \le \min_w
  P(w;y^{(t)}) + \epsilon_{t+1}$. Then, for every $t \ge 1$ and every vector
  $z$ we have
\[
h_t(z) \le P(z) ~.
\]
\end{lemma}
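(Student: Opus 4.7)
The proof will proceed by straightforward induction on $t$, with the inductive step being essentially a convex combination of two applications of Lemma~\ref{lem:lowerBoundQ}. The only nontrivial point is to verify the hypothesis of that lemma in the base case $t=1$, where the vector $w^{(1)}=0$ serves as the ``approximate minimizer'' and the accuracy parameter has to be chosen as $P(0)-D(0)$.

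For the base case, I would apply Lemma~\ref{lem:lowerBoundQ} with $w^+=0$, $y=0$, and $\epsilon=P(0)-D(0)$. To justify the hypothesis $P(w^+;y)\le \min_w P(w;y)+\epsilon$, observe that $P(0;0)=P(0)$ and that
\[
\min_w P(w;0)=\min_w\Bigl[P(w)+\tfrac{\kappa}{2}\|w\|^2\Bigr]\ge \min_w P(w)=P(w^*)\ge D(0),
\]
where the last inequality is weak duality. Hence $P(0;0)-\min_w P(w;0)\le P(0)-D(0)$, and the lemma yields $P(z)\ge P(0)+Q_{P(0)-D(0)}(z;0,0)=h_1(z)$ for every $z$.

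For the inductive step, assume $h_t(z)\le P(z)$ for all $z$. Since $w^{(t+1)}$ is by assumption an $\epsilon_{t+1}$-accurate minimizer of $P(\cdot;y^{(t)})$, Lemma~\ref{lem:lowerBoundQ} applied with $w^+=w^{(t+1)}$, $y=y^{(t)}$, and $\epsilon=\epsilon_{t+1}$ gives
\[
P(z)\ge P(w^{(t+1)})+Q_{\epsilon_{t+1}}(z;w^{(t+1)},y^{(t)})
\]
for every $z$. Taking the convex combination of this inequality with the inductive hypothesis (weights $\eta$ and $1-\eta$ respectively) yields
\[
h_{t+1}(z)=(1-\eta)h_t(z)+\eta\bigl[P(w^{(t+1)})+Q_{\epsilon_{t+1}}(z;w^{(t+1)},y^{(t)})\bigr]\le (1-\eta)P(z)+\eta P(z)=P(z),
\]
completing the induction.

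The main (and essentially only) obstacle is the bookkeeping in the base case: one must match the choice $\epsilon=P(0)-D(0)$ appearing inside $h_1$ to a valid accuracy guarantee for the trivial ``iterate'' $w^{(1)}=0$ with respect to $P(\cdot;0)$, which is what the weak duality step above accomplishes. Once that is in place, the inductive step is a routine convex combination and nothing more is required.
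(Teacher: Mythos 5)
Your proof is correct and follows essentially the same route as the paper's: the base case is handled by verifying, via weak duality ($\min_w P(w;0)\ge \min_w P(w)\ge D(0)$), that $w^{(1)}=0$ is a $(P(0)-D(0))$-accurate minimizer of $P(\cdot;0)$ so that Lemma~\ref{lem:lowerBoundQ} applies, and the inductive step is the same convex combination of the inductive hypothesis with a fresh application of Lemma~\ref{lem:lowerBoundQ}. Nothing is missing.
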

\begin{proof}
  The proof is by induction. For $t=1$, observe that $P(0;0) = P(0)$
  and that for every $w$ we have $P(w;0) \ge P(w) \ge D(0)$. This
  yields $P(0;0) - \min_w P(w;0) \le P(0)-D(0)$. The claim now follows
  directly from \lemref{lem:lowerBoundQ}. Next, for the inductive
  step, assume the claim holds for some $t-1 \ge 1$ and let us prove
  it for $t$. By the recursive definition of $h_t$ and by using
  \lemref{lem:lowerBoundQ} we have
\[
 h_t(z) = (1-\eta) h_{t-1}(z) + \eta
 (P(w^{(t)}) + Q_{\epsilon_{t}}(z;w^{(t)},y^{(t-1)}))  \le  (1-\eta) h_{t-1}(z) +
\eta P(z) ~.
\]
Using the inductive assumption we obtain that the right-hand side of
the above is upper bounded by $(1-\eta)P(z)+\eta P(z) = P(z)$,
which concludes our proof. 
\end{proof}

The more difficult part of the proof is to show that for every $t \ge
1$, 
\[
P(w^{(t)}) \le \min_w h_t(w) + \xi_t ~.
\]
If this holds true, then we would immediately get that for every
$w^*$,
\[
P(w^{(t)})-P(w^*) \le P(w^{(t)}) - h_t(w^*) \le 
P(w^{(t)}) - \min_w h_t(w) \le \xi_t ~.
\]
This will conclude the proof of the first part of
\thmref{thm:acceleratedThmMain}, since $\xi_t = \xi_1
(1-\eta/2)^{t-1} \le \xi_1\,e^{-(t-1)\eta/2}$, and therefore, $1 +
\frac{2}{\eta} \log(\xi_1/\epsilon)$ iterations suffice to
guarantee that $P(w^{(t)})-P(w^*) \le \epsilon$. 

Define
\[
v^{(t)} = \argmin_w h_t(w) ~.
\]
Let us construct an explicit formula for $v^{(t)}$. Clearly, $v^{(1)} = 0$.
Assume that we have calculated $v^{(t)}$ and let us calculate $v^{(t+1)}$. Note
that $h_t$ is a quadratic function which is minimized at
$v^{(t)}$. Furthermore, it is easy to see that for every $t$, $h_t$ is
$\mu$-strongly convex quadratic function. Therefore,
\[
h_t(z) = h_t(v^{(t)}) + \frac{\mu}{2} \|z - v^{(t)}\|^2 ~.
\]
By the definition of $h_{t+1}$ we obtain that 
\[
h_{t+1}(z) = (1-\eta) (h_t(v^{(t)}) + \frac{\mu}{2} \|z - v^{(t)}\|^2  ) + \eta (P(w^{(t+1)}) + Q_{\epsilon_{t+1}}(z;w^{(t+1)},y^{(t)})) ~.
\]
Since the gradient of $h_{t+1}(z)$ at $v^{(t+1)}$ should be zero, we
obtain that $v^{(t+1)}$ should satisfy 
\[
(1-\eta)\mu(v^{(t+1)} -v^{(t)}) + \eta \mu \left( v^{(t+1)} - (y^{(t)} -
\tfrac{\rho}{\mu}(y^{(t)}-w^{(t+1)}) ) \right) = 0
\]
Rearranging, we obtain
\begin{equation} \label{eqn:vtp1exp}
v^{(t+1)} = (1-\eta)v^{(t)} + \eta (y^{(t)} -
\tfrac{\rho}{\mu}(y^{(t)}-w^{(t+1)}) ) ~.
\end{equation}

Getting back to our second phase of the proof, we need to show that
for every $t$ we have $P(w^{(t)}) \le h_t(v^{(t)}) + \xi_t$. We do so
by induction. For the case $t=1$ we have
\[
P(w^{(1)}) - h_1(v^{(1)}) = P(0) - h_1(0) = (1+\rho/\mu)(P(0)-D(0)) =
\xi_1 ~.
\]
For the induction step, assume the claim holds for $t \ge 1$ and let
us prove it for $t+1$. We use the shorthands,
\[
Q_t(z) = Q_{\epsilon_t}(z;w^{(t)},y^{(t-1)}) ~~~\text{and}~~~
\psi_t(z) = Q_t(z) + P(w^{(t)}) ~~.
\]
Let us rewrite $h_{t+1}(v^{(t+1)})$ as
\begin{align*}
h_{t+1}(v^{(t+1)}) &= (1-\eta)h_t(v^{(t+1)}) + \eta \psi_{t+1}(v^{(t+1)}) \\
&= (1-\eta)(h_t(v^{(t)})+\frac{\mu}{2} \|v^{(t)}-v^{(t+1)}\|^2) +
\eta \psi_{t+1}(v^{(t+1)}) ~.
\end{align*}
By the inductive assumption we have $h_t(v^{(t)}) \ge P(w^{(t)})
-\xi_t$ and by \lemref{lem:lowerBoundQ} we have $P(w^{(t)}) \ge
\psi_{t+1}(w^{(t)})$. Therefore, 
\begin{align} \label{eqn:inductivebeg}
h_{t+1}(v^{(t+1)}) &\ge  (1-\eta)(\psi_{t+1}(w^{(t)})-\xi_t+\frac{\mu}{2} \|v^{(t)}-v^{(t+1)}\|^2) +
\eta \psi_{t+1}(v^{(t+1)}) \\ \nonumber
&= \frac{(1-\eta)\mu}{2} \|v^{(t)}-v^{(t+1)}\|^2 + \eta \psi_{t+1}(v^{(t+1)}) +
(1-\eta)\psi_{t+1}(w^{(t)}) - (1-\eta)\xi_t  ~.
\end{align}
Next, note that we can rewrite
\[
Q_{t+1}(z) =\frac{\mu}{2} \|z-y^{(t)}\|^2 +
\rho(z-y^{(t)})^\top(y^{(t)}-w^{(t+1)}) + \frac{\rho}{2}
\|y^{(t)}-w^{(t+1)}\|^2 - (1+\rho/\mu)\epsilon_{t+1}~.
\]
Therefore,
\begin{align} \label{eqn:inductivepsi}
&\eta \psi_{t+1}(v^{(t+1)}) +
(1-\eta)\psi_{t+1}(w^{(t)}) - P(w^{(t+1)}) +
(1+\rho/\mu)\epsilon_{t+1}\\ \nonumber
&= \frac{\eta\mu}{2}
\|v^{(t+1)}-y^{(t)}\|^2 +\frac{(1-\eta)\mu}{2}
\|w^{(t)}-y^{(t)}\|^2 + \rho(\eta v^{(t+1)} +
(1-\eta)w^{(t)}-y^{(t)})^\top(y^{(t)}-w^{(t+1)}) \\ \nonumber
&+ \frac{\rho}{2}
\|y^{(t)}-w^{(t+1)}\|^2
\end{align}
So far we did not specify $\eta$ and $y^{(t)}$ (except $y^{(0)}=0$). We next set 
\[
\eta = \sqrt{\mu/\rho} ~~~\text{and}~~~ \forall t \ge 1,~y^{(t)} = (1+\eta)^{-1}(\eta
v^{(t)} + w^{(t)}) ~.
\]
This choices guarantees that (see \eqref{eqn:vtp1exp}) 
\begin{align*}
\eta v^{(t+1)} + (1-\eta)w^{(t)} &=
\eta(1-\eta)v^{(t)}+\eta^2(1-\frac{\rho}{\mu})y^{(t)} + \eta^2
\frac{\rho}{\mu} w^{(t+1)} +
(1-\eta)w^{(t)} \\
&= w^{(t+1)} + (1-\eta) \left[ \eta v^{(t)} +
  \frac{\eta^2(1-\frac{\rho}{\mu})}{1-\eta}y^{(t)} + w^{(t)}
\right] \\
&= w^{(t+1)} + (1-\eta) \left[ \eta v^{(t)} -
  \frac{1-\eta^2}{1-\eta}y^{(t)} + w^{(t)} \right] \\
&= w^{(t+1)} + (1-\eta) \left[ \eta v^{(t)} -
  (1+\eta)y^{(t)} + w^{(t)} \right] \\
&= w^{(t+1)} ~.
\end{align*}
We also observe that $\epsilon_{t+1} \le \frac{\eta
  \xi_t}{2(1+\eta^{-2})}$ which implies that 
$ (1+\rho/\mu)\epsilon_{t+1} +
(1-\eta) \xi_t \le (1-\eta/2)\xi_t = \xi_{t+1}$. 
Combining the above with \eqref{eqn:inductivebeg} and
\eqref{eqn:inductivepsi}, and rearranging terms, we obtain that
\begin{align*}
&h_{t+1}(v^{(t+1)}) - P(w^{(t+1)}) + \xi_{t+1} - \frac{(1-\eta)\mu}{2}
\|w^{(t)}-y^{(t)}\|^2\\
&\ge \frac{(1-\eta)\mu}{2} \|v^{(t)}-v^{(t+1)}\|^2
+ \frac{\eta\mu}{2}
\|v^{(t+1)}-y^{(t)}\|^2 - \frac{\rho}{2}
\|y^{(t)}-w^{(t+1)}\|^2  ~.
\end{align*}
Next, observe that $\rho \eta^2= \mu$ and that by
\eqref{eqn:vtp1exp} we have
\[
y^{(t)}-w^{(t+1)} = \eta\left[ \eta y^{(t)} + (1-\eta)v^{(t)} -
  v^{(t+1)}\right] ~.
\]
We therefore obtain that
\begin{align*}
&h_{t+1}(v^{(t+1)}) - P(w^{(t+1)}) + \xi_{t+1} - \frac{(1-\eta)\mu}{2}
\|w^{(t)}-y^{(t)}\|^2 \\
&\ge \frac{(1-\eta)\mu}{2} \|v^{(t)}-v^{(t+1)}\|^2
+ \frac{\eta\mu}{2}
\|y^{(t)} -v^{(t+1)}\|^2  - \frac{\mu}{2}
\|\eta y^{(t)} + (1-\eta)v^{(t)} -
  v^{(t+1)} \|^2 ~.
\end{align*}
The right-hand side of the above is non-negative because of
the convexity of the function $f(z) = \frac{\mu}{2}
\|z-v^{(t+1)}\|^2$, which yields
\[
P(w^{(t+1)}) \le h_{t+1}(v^{(t+1)}) + \xi_{t+1} - \frac{(1-\eta)\mu}{2}
\|w^{(t)}-y^{(t)}\|^2 \le h_{t+1}(v^{(t+1)}) + \xi_{t+1} ~.
\]
This concludes our inductive argument. 

\paragraph{Proving the ``runtime'' part of
  \thmref{thm:acceleratedThmMain}:}

We next show that each call to Prox-SDCA will terminate quickly. 
By the definition of $\kappa$ we have that 
\[
\frac{R^2}{(\kappa+\lambda)\gamma} = n ~.
\]
Therefore, based
on \corref{cor:smooth} we know that the averaged runtime at iteration $t$ is 
\[
O\left(d\,n \log\left( \frac{\tilde{D}_{t}(\alpha^*)-\tilde{D}_{t}(\alpha^{(t-1)})}{\frac{\eta}{2(1+\eta^{-2})}\xi_{t-1}} \right)\right) ~.
\]
The following lemma bounds the initial dual sub-optimality at
iteration $t \ge 4$. Similar arguments will yield a similar result for
$t < 4$. 
\begin{lemma}
\[
\tilde{D}_{t}(\alpha^*)-\tilde{D}_{t}(\alpha^{(t-1)}) \le \epsilon_{t-1} + \frac{36\kappa}{\lambda}  \xi_{t-3} ~.
\]
\end{lemma}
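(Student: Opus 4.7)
The plan is to upper bound the initial dual sub-optimality at iteration $t$ by the duality gap achieved at iteration $t-1$ (which is $\le \epsilon_{t-1}$ by the Prox-SDCA guarantee), plus a perturbation term that measures the shift of the proximal center from $y^{(t-2)}$ to $y^{(t-1)}$. First I would apply weak duality to problem $\tilde P_t$ to write $\tilde D_t(\alpha^*) - \tilde D_t(\alpha^{(t-1)}) \le \tilde P_t(w^{(t-1)}) - \tilde D_t(\alpha^{(t-1)})$, and then telescope by adding and subtracting $\tilde P_{t-1}(w^{(t-1)})$ and $\tilde D_{t-1}(\alpha^{(t-1)})$. This splits the right-hand side into $[\tilde P_{t-1}(w^{(t-1)}) - \tilde D_{t-1}(\alpha^{(t-1)})]$ plus a perturbation $\Delta := [\tilde P_t - \tilde P_{t-1}](w^{(t-1)}) - [\tilde D_t - \tilde D_{t-1}](\alpha^{(t-1)})$, the first bracket being $\le \epsilon_{t-1}$ by construction.

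To handle $\Delta$, I would introduce $H(w) = \lambda g(w) + \tfrac{\kappa}{2}\|w\|^2$, which is $(\lambda+\kappa)$-strongly convex, and verify by a short Fenchel computation that $\tilde D_t(\alpha) = -\frac1n\sum_i \phi_i^*(-\alpha_i) - H^*(u(\alpha)+\kappa y^{(t-1)}) + \tfrac{\kappa}{2}\|y^{(t-1)}\|^2$, where $u(\alpha) = \tfrac1n\sum_i X_i\alpha_i$. Expanding $\tilde P_t(w) - \tilde P_{t-1}(w) = \kappa w^\top(y^{(t-2)}-y^{(t-1)}) + \tfrac{\kappa}{2}(\|y^{(t-1)}\|^2 - \|y^{(t-2)}\|^2)$, the $\|y\|^2$-pieces cancel the corresponding terms in the dual shift, and the $1/(\lambda+\kappa)$-smoothness of $H^*$ gives
\[
H^*(u^{(t-1)}+\kappa y^{(t-1)}) - H^*(u^{(t-1)}+\kappa y^{(t-2)}) \le \kappa w_\star^\top(y^{(t-1)}-y^{(t-2)}) + \tfrac{\kappa^2}{2(\lambda+\kappa)}\|y^{(t-1)}-y^{(t-2)}\|^2,
\]
where $w_\star = \nabla H^*(u^{(t-1)}+\kappa y^{(t-2)})$. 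The key observation is that $w_\star$ is exactly the Prox-SDCA primal-recovery formula for problem $t-1$ evaluated at $\alpha^{(t-1)}$, so $w_\star = w^{(t-1)}$. This identity makes the linear term $\kappa(w^{(t-1)})^\top(y^{(t-1)}-y^{(t-2)})$ cancel exactly the corresponding piece of the primal shift, leaving $\Delta \le \tfrac{\kappa^2}{2(\lambda+\kappa)}\|y^{(t-1)}-y^{(t-2)}\|^2 \le \tfrac{\kappa}{2}\|y^{(t-1)}-y^{(t-2)}\|^2$.

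Finally, I would bound $\|y^{(t-1)}-y^{(t-2)}\|$ (kept un-squared, which is what controls the final constant) via the momentum update $y^{(k)} = w^{(k)} + \beta(w^{(k)}-w^{(k-1)})$, giving $\|y^{(t-1)}-y^{(t-2)}\| \le (1+\beta)\|w^{(t-1)}-w^{(t-2)}\| + \beta\|w^{(t-2)}-w^{(t-3)}\|$. By $\lambda$-strong convexity of $P$ (inherited from $g$) and the main induction of \thmref{thm:acceleratedThmMain}, $\|w^{(k)}-w^*\| \le \sqrt{2\xi_k/\lambda}$, and hence $\|w^{(k)}-w^{(k-1)}\| \le \sqrt{2\xi_k/\lambda} + \sqrt{2\xi_{k-1}/\lambda}$. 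Since $\xi_k = (1-\eta/2)^{k-1}\xi_1$ is decreasing, $\sqrt{\xi_k} \le \sqrt{\xi_{t-3}}$ for $k \ge t-3$, so $\|y^{(t-1)}-y^{(t-2)}\| \le (2+4\beta)\sqrt{2\xi_{t-3}/\lambda} \le 6\sqrt{2\xi_{t-3}/\lambda}$ (using $\beta<1$). Squaring and applying the $\kappa/2$ factor yields $\Delta \le 36\kappa\xi_{t-3}/\lambda$, which combined with the $\epsilon_{t-1}$ bound closes the proof.

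The step I expect to be the trickiest is the linear-term cancellation, which hinges on the identity $w^{(t-1)} = \nabla H^*(u^{(t-1)}+\kappa y^{(t-2)})$ tying the primal and dual iterates through the Prox-SDCA primal-recovery rule (this is clean when the "Random option" is used inside Prox-SDCA, since then the returned $\bar w$ and $\bar\alpha$ come from the same inner iterate; a small extra argument is needed for the averaging option). Without this identity, $\Delta$ would contain an unmanageable first-order term in $\|y^{(t-1)}-y^{(t-2)}\|$ that could not be absorbed into a purely quadratic perturbation; everything else is careful but mechanical bookkeeping, and keeping $\|\cdot\|$ instead of $\|\cdot\|^2$ in the last step is what lets the constant land at $36$ rather than something noticeably larger.
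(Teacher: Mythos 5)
Your proposal is correct and follows essentially the same route as the paper's proof: weak duality plus the telescoping split into the previous duality gap and a perturbation term, smoothness of the conjugate of the combined regularizer $\lambda g + \tfrac{\kappa}{2}\|\cdot\|^2$ to control the dual shift, the primal-recovery identity $w^{(t-1)} = \nabla f^*(v(\alpha^{(t-1)}) + \tfrac{\kappa}{\tilde{\lambda}}y^{(t-2)})$ to cancel the linear term against the primal shift, and then strong convexity of $P$ with the main induction to bound $\|y^{(t-1)}-y^{(t-2)}\|^2 \le 72\xi_{t-3}/\lambda$, landing on the same constant $36$. Your remark about the averaging versus random output option affecting the primal-recovery identity is a fair point that the paper glosses over, but it does not change the argument.
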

\begin{proof}
  Define $\tilde{\lambda} = \lambda + \kappa$, $f(w) =
  \frac{\lambda}{\tilde{\lambda}}g(w) +
  \frac{\kappa}{2\tilde{\lambda}}\|w\|^2$, and $\tilde{g}_t(w) = f(w)
  - \frac{\kappa}{\tilde{\lambda}} w^\top y^{(t-1)}$. Note that
  $\tilde{\lambda}$ does not
  depend on $t$ and therefore $v(\alpha) = \frac{1}{n \tilde{\lambda}}
  \sum_i X_i \alpha_i$ is the same for every $t$. Let, 
\[
\tilde{P}_t(w) = \frac{1}{n} \sum_{i=1}^n \phi_i(X_i^\top w) +  \tilde{\lambda}
\tilde{g}_t(w) ~.
\]
We have
\begin{equation} \label{eqn:tildePbso}
\tilde{P}_t(w^{(t-1)}) = \tilde{P}_{t-1}(w^{(t-1)}) +
\kappa w^{(t-1) \top} (y^{(t-2)}-y^{(t-1)}) ~.
\end{equation}
Since
\[
\tilde{g}_t^*(\theta) = \max_w w^\top
(\theta+\frac{\kappa}{\tilde{\lambda}} y^{(t-1)}) - f(w) 
= f^*(\theta+\frac{\kappa}{\tilde{\lambda}} y^{(t-1)}) ~,
\]
we obtain that the dual problem is
\[
\tilde{D}_t(\alpha) = -\frac{1}{n} \sum_i \phi^*_i(-\alpha_i) -
\tilde{\lambda} f^*(v(\alpha) +\frac{\kappa}{\tilde{\lambda}} y^{(t-1)})
\]
Let $z = \frac{\kappa}{\tilde{\lambda}} (y^{(t-1)} - y^{(t-2)})$,
then, by the smoothness of $f^*$ we have 
\[
f^*(v(\alpha) +\frac{\kappa}{\tilde{\lambda}} y^{(t-1)}) =
f^*(v(\alpha) +\frac{\kappa}{\tilde{\lambda}} y^{(t-2)} + z) \le
f^*(v(\alpha) +\frac{\kappa}{\tilde{\lambda}} y^{(t-2)}) + \nabla f^*(v(\alpha) +\frac{\kappa}{\tilde{\lambda}} y^{(t-2)})^\top  z
+ \frac{1}{2} \|z\|^2 ~.
\]
Applying this for $\alpha^{(t-1)}$ and using $w^{(t-1)} = \nabla
\tilde{g}_{t-1}^*(v(\alpha^{(t-1)})) = \nabla f^*(v(\alpha^{(t-1)}) +
\frac{\kappa}{\tilde{\lambda}}y^{(t-2)})$, we obtain 
\[
f^*(v(\alpha^{(t-1)} ) +\frac{\kappa}{\tilde{\lambda}} y^{(t-1)}) \le 
f^*(v(\alpha^{(t-1)} ) +\frac{\kappa}{\tilde{\lambda}} y^{(t-2)}) + w^{(t-1)\,\top}  z
+ \frac{1}{2} \|z\|^2 ~.
\]
It follows that
\[
-\tilde{D}_t(\alpha^{(t-1)}) + \tilde{D}_{t-1}(\alpha^{(t-1)}) \le
\kappa w^{(t-1) \top} (y^{(t-1)}-y^{(t-2)})  +
\frac{\kappa^2}{2\tilde{\lambda}} \|y^{(t-1)}-y^{(t-2)}\|^2 ~. 
\]
Combining the above with \eqref{eqn:tildePbso}, we obtain that
\[
\tilde{P}_t(w^{(t-1)}) - \tilde{D}_t(\alpha^{(t-1)}) \le
\tilde{P}_{t-1}(w^{(t-1)}) - \tilde{D}_{t-1}(\alpha^{(t-1)}) + \frac{\kappa^2}{2 \tilde{\lambda}} \|y^{(t-1)}-y^{(t-2)}\|^2 ~. 
\]
Since $\tilde{P}_t(w^{(t-1)}) \ge \tilde{D}_t(\alpha^*)$ and since
$\tilde{\lambda} \ge \kappa$ we get that
\[
\tilde{D}_t(\alpha^*) - \tilde{D}_t(\alpha^{(t-1)}) \le
\epsilon_{t-1} + \frac{\kappa}{2} \|y^{(t-1)}-y^{(t-2)}\|^2 ~. 
\]
Next, we bound $\|y^{(t-1)}-y^{(t-2)}\|^2$. We have
\begin{align*}
\|y^{(t-1)}-y^{(t-2)}\| &= \|w^{(t-1)} - w^{(t-2)} + \beta(w^{(t-1)}-w^{(t-2)} -
w^{(t-2)} + w^{(t-3)})\| \\
&\le 3 \max_{i \in \{1,2\}} \|w^{(t-i)} -
w^{(t-i-1)}\|  ~,
\end{align*}
where we used the triangle inequality and $\beta < 1$.
By strong convexity of $P$ we have, for every $i$, 
\[
\|w^{(i)}-w^*\| \le \sqrt{\frac{P(w^{(i)})-P(w^*)}{\lambda/2}} \le
\sqrt{\frac{\xi_i}{\lambda/2}} ~,
\]
which implies
\[
\|w^{(t-i)}-w^{(t-i-1)}\| \le \|w^{(t-i)}-w^*\| + \|w^* - w^{(t-i-1)}\|
\le 2 \sqrt{\frac{\xi_{t-i-1}}{\lambda/2}} ~.
\]
This yields the bound
\[
\|y^{(t-1)}-y^{(t-2)}\|^2 \le 72
  \frac{\xi_{t-3}}{\lambda} ~.
\]
All in all, we have obtained that
\[
\tilde{D}_t(\alpha^*) - \tilde{D}_t(\alpha^{(t-1)}) \le
\epsilon_{t-1} + \frac{36\kappa}{\lambda}  \xi_{t-3} ~.
\]
\end{proof}

Getting back to the proof of the second claim of
\thmref{thm:acceleratedThmMain}, we have obtained that 
\begin{align*}
\frac{\tilde{D}_{t}(\alpha^*)-\tilde{D}_{t}(\alpha^{(t-1)})}{\frac{\eta}{2(1+\eta^{-2})}\xi_{t-1}}
&\le \frac{\epsilon_{t-1}}{\frac{\eta}{2(1+\eta^{-2})}\xi_{t-1}} + \frac{36\kappa
  \xi_{t-3}}{\lambda \frac{\eta}{2(1+\eta^{-2})}\xi_{t-1}} \\
&\le  (1-\eta/2)^{-1} + 
 \frac{36\kappa 2(1+\eta^{-2})
  }{\lambda \eta} (1-\eta/2)^{-2} \\
&\le (1-\eta/2)^{-4}\left(1 + \frac{72\kappa (1+\eta^{-2})
  }{\lambda \eta} \right) \\
&\le (1-\eta/2)^{-2}\left(1 + 36\eta^{-5}\right) ~~,
\end{align*}
where in the last inequality we used $\eta^{-2} - 1 =
\frac{2\kappa}{\lambda}$, which implies that
$\frac{2\kappa}{\lambda}(1+\eta^{-2}) \le \eta^{-4}$. Using $1 <
\eta^{-5}$, $1-\eta/2 \ge 0.5$, and taking log to both
sides, we get that
\[
\log\left(\frac{\tilde{D}_{t}(\alpha^*)-\tilde{D}_{t}(\alpha^{(t-1)})}{\frac{\eta}{2(1+\eta^{-2})}\xi_{t-1}}\right) 
\le 2\log(2) + \log(37) - 5 \log(\eta) \le 7 + 2.5
\log\left(\frac{R^2}{\lambda \gamma n}\right) ~.
\]
All in all, we have shown that the average runtime required by
Prox-SDCA$(\tilde{P}_t,\frac{\eta}{2(1+\eta^{-2})}\xi_{t-1},\alpha^{(t-1)})$ is upper bounded by
\[
O\left(d\,n \log\left(\frac{R^2}{\lambda \gamma n}\right)\right) ~,
\]
which concludes the proof of the second claim of \thmref{thm:acceleratedThmMain}.

\section{Applications} \label{sec:applications}

In this section we specify our algorithmic framework to several
popular machine learning applications. In \secref{sec:appLossFunc} we
start by describing several loss functions and deriving their
conjugate. In \secref{sec:appRegularizers} we describe several
regularization functions. Finally, in the rest of the subsections we
specify our algorithm for Ridge regression, SVM, Lasso, logistic
regression, and multiclass prediction. 

\subsection{Loss functions} \label{sec:appLossFunc}

\paragraph{Squared loss:}
$\phi(a) = \frac{1}{2} (a-y)^2$ for some $y \in \reals$. The conjugate
function is
\[
\phi^*(b) = \max_a a b - \frac{1}{2} (a-y)^2 = \frac{1}{2} b^2 + yb
\]

\paragraph{Logistic loss:}
$\phi(a) = \log(1+e^{a})$. The derivative is $\phi'(a) = 1/(1+e^{-a})$
and the second derivative is $\phi''(a) = \frac{1}{(1+e^{-a})(1+e^a)}
\in [0,1/4]$, from which it follows that $\phi$ is $(1/4)$-smooth. 
The conjugate function is
\[
\phi^*(b) = \max_a  a b - \log(1+e^{a})  = 
\begin{cases}
b \log(b) +
(1-b)\log(1-b)   & \textrm{if}~ b \in [0,1] \\
\infty & \textrm{otherwise}
\end{cases}
\]

\paragraph{Hinge loss:}
$\phi(a) = [1-a]_+ := \max\{0,1-a\}$. The conjugate function is 
\[
\phi^*(b) = \max_a a b - \max\{0,1-a\} = 
\begin{cases}
b  & \textrm{if}~ b \in [-1,0] \\
\infty & \textrm{otherwise}
\end{cases}
\]

\paragraph{Smooth hinge loss:}
This loss is obtained by smoothing the hinge-loss using the technique
described in \lemref{lem:smoothingLemma}. This loss is parameterized
by a scalar $\gamma > 0$ and is defined as:
\begin{equation} \label{eqn:smoothhinge}
\tilde{\phi}_\gamma(a) = \begin{cases}
0  & a \ge 1 \\
1-a-\gamma/2  & a \le 1-\gamma \\
\frac{1}{2\gamma}(1-a)^2    & \textrm{o.w.}
\end{cases} 
\end{equation}
The conjugate function is
\[
\tilde{\phi}_\gamma^*(b) = 
\begin{cases}
b + \frac{\gamma}{2} b^2 & \textrm{if}~ b \in [-1,0] \\
\infty & \textrm{otherwise}
\end{cases}
\]
It follows that $\tilde{\phi}_\gamma^*$ is $\gamma$ strongly convex
and $\tilde{\phi}$ is $(1/\gamma)$-smooth. In addition, if $\phi$ is
the vanilla hinge-loss, we have for every
$a$ that
\[
\phi(a)-\gamma/2 \le \tilde{\phi}(a) \le \phi(a) ~.
\]

\paragraph{Max-of-hinge:} 
The max-of-hinge loss function is a function from $\reals^{k}$ to
$\reals$, which is defined as:
\[
\phi(a) = \max_j \, [c_j + a_j]_+ ~,
\]
for some $c \in \reals^k$.  This loss function is
useful for multiclass prediction problems. 

To calculate the conjugate of $\phi$, let 
\begin{equation} \label{eqn:Sdef}
S = \{\beta \in \reals_+^k
: \|\beta\|_1 \le 1\}
\end{equation}
and note that we can write $\phi$ as
\[
\phi(a) = \max_{\beta \in S} \sum_j
\beta_j (c_j + a_j) ~.
\]
Hence, the conjugate of $\phi$ is
\begin{align*}
\phi^*(b) &= \max_{a} \left[ a^\top b - \phi(a) \right] 
= \max_{a} \min_{\beta \in S} \left[ a^\top b - \sum_j \beta_j
(c_j +  a_j) \right] \\
&= \min_{\beta \in S} \max_{a} \left[ a^\top b - \sum_j \beta_j
(c_j +  a_j)  \right] 
= \min_{\beta \in S} \left[ - \sum_j \beta_j c_j + \sum_j \max_{a_j}  a_j (b_j -
   \beta_j)\right] .
\end{align*}
Each inner maximization over $a_j$ would be $\infty$
unless $\beta_j = b_j$. Therefore,
\begin{equation} \label{eqn:maxOfHingeConj}
\phi^*(b) = \begin{cases}
- c^\top b & ~\textrm{if}~  b \in S\\
\infty & ~\textrm{otherwise}
\end{cases}
\end{equation}

\paragraph{Smooth max-of-hinge}

This loss obtained by smoothing the max-of-hinge loss using the
technique described in \lemref{lem:smoothingLemma}. This loss is parameterized by a scalar
$\gamma > 0$. We start by adding regularization to the 
conjugate of the max-of-hinge given in \eqref{eqn:maxOfHingeConj} and
obtain 
\begin{equation} \label{eqn:SmoothMaxOfHingeConj}
\tilde{\phi}^*_\gamma(b) = \begin{cases}
\frac{\gamma}{2} \|b\|^2 - c^\top b & ~\textrm{if}~  b \in S\\
\infty & ~\textrm{otherwise}
\end{cases}
\end{equation}

Taking the conjugate of the conjugate we obtain
\begin{align} \nonumber
\tilde{\phi}_\gamma(a) &= \max_{b} b^\top a -
\tilde{\phi}^*_\gamma(b) \\ \nonumber
&= \max_{b \in S} b^\top (a+c) - \frac{\gamma}{2} \|b\|^2 \\
&= \frac{\gamma}{2} \|(a+c)/\gamma\|^2 - \frac{\gamma}{2} \min_{b \in S} \|b -
(a+c)/\gamma\|^2 \label{eqn:SmoothMaxOfHingeConj}
\end{align}
While we do not have a closed form solution for the minimization
problem over $b$ in the definition of $\tilde{\phi}_\gamma$ above,
this is a problem of projecting onto the intersection of the $L_1$
ball and the positive orthant, and can be solved efficiently using the
following procedure, adapted from \cite{duchi2008efficient}.

\begin{myalgo}{Project$(\mu)$}
\textbf{Goal:} solve $\argmin_b \|b-\mu\|^2 ~\textrm{s.t.}~ b \in
\reals_+^k, \|b\|_1 \le 1$ \\
\textbf{Let:} $\forall i, ~\tilde{\mu}_i = \max\{0,\mu_i\}$ \\
\textbf{If:} $\|\tilde{\mu}\|_1 \le 1$ stop and return $b =
\tilde{\mu}$ \\
\textbf{Sort:} let $i_1,\ldots,i_k$ be s.t. $\mu_{i_1} \ge \mu_{i_2}
\ge \ldots \ge \mu_{i_k}$ \\
\textbf{Find:} $j^* = \max\left\{ j : j\,\tilde{\mu}_{i_j} + 1 -  
  \sum_{r=1}^j \tilde{\mu}_{i_r} > 0 \right\}$ \\
\textbf{Define:} $\theta = -1 + \sum_{r=1}^{j^*} \tilde{\mu}_{i_r} $ \\
\textbf{Return:} $b$ s.t. $\forall i,~b_i = \max\{\mu_i - \theta/j^*, 0 \}$ 
\end{myalgo}

It also holds that $\nabla \tilde{\phi}_\gamma(a) = \argmin_{b \in S}
\|b - (a+c)/\gamma\|^2$, and therefore the gradient can also be
calculated using the above projection procedure. 

Note that if $\phi$ being the max-of-hinge loss, then 
$\phi^*(b)+\gamma/2 \ge \tilde{\phi}^*_\gamma(b) \ge
\phi^*(b)$ and hence $\phi(a) - \gamma/2 \le \tilde{\phi}_\gamma(a)
\le \phi(a)$. 

Observe that all negative elements of $a+c$ does not contribute to
$\tilde{\phi}_\gamma$. This immediately implies  that if $\phi(a) = 0$
then we also have $\tilde{\phi}_\gamma(a)=0$. 


\paragraph{Soft-max-of-hinge loss function:}
Another approach to smooth the max-of-hinge loss function is by using
soft-max instead of max. The resulting soft-max-of-hinge loss function
is defined as
\begin{equation} \label{eqn:soft-max-loss}
\phi_\gamma(a) = \gamma \log\left( 1 + \sum_{i=1}^k e^{(c_i+a_i)/\gamma}  \right) ~,
\end{equation}
where $\gamma > 0$ is a parameter.
We have
\[
\max_i [c_i+a_i]_+ \le \phi_\gamma(a) \le \max_i [c_i+a_i]_+ +
\gamma\,\log(k+1)~.
\]

The $j$'th element of the gradient of $\phi$ is
\[
\nabla_j \phi_\gamma(a) = \frac{e^{(c_j+a_j)/\gamma}}{1 + \sum_{i=1}^k
  e^{(c_i+a_i)/\gamma} } ~.
\]
By the definition of the conjugate we have $\phi_\gamma^*(b) = \max_a
a^\top b - \phi_\gamma(a)$.  The vector $a$ that maximizes the above
must satisfy
\[
\forall j,~~ b_j =  \frac{e^{(c_j+a_j)/\gamma}}{1 + \sum_{i=1}^k
  e^{(c_i+a_i)/\gamma} } ~.
\]
This can be satisfied only if $b_j \ge 0$ for all $j$ and $\sum_j b_j
\le 1$. That is, $b \in S$. Denote $Z = \sum_{i=1}^k
  e^{(c_i+a_i)/\gamma}$ and note that
\[
(1+Z) \|b\|_1 = Z ~~\Rightarrow~~ Z = \frac{\|b\|_1}{1-\|b\|_1} 
~~\Rightarrow~~ 1+Z = \frac{1}{1-\|b\|_1}  ~.
\]
It follows that 
\[a_j = \gamma(\log(b_j) +
    \log(1+Z)) - c_j  = \gamma(\log(b_j) -
    \log(1-\|b\|_1)) - c_j
\]
which yields
\begin{align*}
\phi_\gamma^*(b)  &= \sum_j \left(\gamma(\log(b_j) -
    \log(1-\|b\|_1)) - c_j\right) b_j + \gamma \log( 1 - \|b\|_1) \\
&= -c^\top b + \gamma \left((1-\|b\|_1)\log(1-\|b\|_1) + \sum_j b_j \log(b_j) 
  \right) ~.
\end{align*}
Finally, if $b \notin S$ then the gradient of $a^\top b -
\phi_\gamma(a)$ does not vanish anywhere, which means that
$\phi_\gamma^*(b) = \infty$. All in all, we obtain
\begin{equation} \label{eqn:soft-max-loss-conjugate}
\phi_\gamma^*(b) = 
\begin{cases}
-c^\top b + \gamma \left((1-\|b\|_1)\log(1-\|b\|_1) + \sum_j b_j \log(b_j) 
  \right)  & ~\textrm{if}~ b \in S \\
\infty & ~\textrm{otherwise}
\end{cases}
\end{equation}
Since the entropic function, $\sum_j b_j \log(b_j)$ is $1$-strongly
convex over $S$ with respect to the $L_1$ norm, we obtain that
$\phi^*_\gamma$ is $\gamma$-strongly convex with respect to the $L_1$
norm, from which it follows that $\phi_\gamma$ is $(1/\gamma)$-smooth
with respect to the $L_\infty$ norm.

\subsection{Regularizers} \label{sec:appRegularizers}

\paragraph{$L_2$ regularization:}
The simplest regularization is the squared $L_2$ regularization
\[
g(w) = \frac{1}{2} \|w\|_2^2 ~.
\]
This is a $1$-strongly convex regularization function whose conjugate is 
\[
g^*(\theta) = \frac{1}{2} \|\theta\|_2^2 ~.
\]
We also have
\[
\nabla g^*(\theta) = \theta ~.
\]

For our acceleration procedure, we also use the $L_2$ regularization
plus a linear term, namely, 
\[
g(w) = \frac{1}{2} \|w\|^2  -  w^\top z ~,
\]
for some vector $z$. The conjugate of this function is
\[
g^*(\theta) = \max_w \left[w^\top (\theta+z) - \frac{1}{2} \|w\|^2
\right] = \frac{1}{2} \|\theta+z\|^2 ~.
\]
We also have
\[
\nabla g^*(\theta) = \theta + z~.
\]

\paragraph{$L_1$ regularization:}
Another popular regularization we consider is the $L_1$
regularization, 
\[
f(w) =  \sigma\, \|w\|_1 ~.
\]
This is not a strongly convex regularizer and therefore we will add a
slight $L_2$ regularization to it and define the $L_1$-$L_2$
regularization as
\begin{equation} \label{eqn:gdefl1l2}
g(w) = \frac{1}{2} \|w\|_2^2 + \sigma'\, \|w\|_1 ~,
\end{equation}
where $\sigma' = \frac{\sigma}{\lambda}$ for some small $\lambda$. Note that 
\[
\lambda g(w) = \frac{\lambda}{2} \|w\|_2^2 + \sigma \|w\|_1 ~,
\]
so if $\lambda$ is small enough (as will be formalized later) we
obtain that $\lambda g(w) \approx \sigma \|w\|_1$.

The conjugate of $g$ is
\begin{align*}
g^*(v) &= \max_{w} \left[w^\top v - \frac{1}{2} \|w\|_2^2 - \sigma'
  \|w\|_1 \right] ~.
\end{align*}
The maximizer is also $\nabla g^*(v)$ and we now show how to calculate it. We have
\begin{align*}
\nabla g^*(v) &= \argmax_{w} \left[w^\top v - \frac{1}{2} \|w\|_2^2 -
\sigma' \|w\|_1 \right] \\
&= \argmin_w \left[ \frac{1}{2} \|w-v\|_2^2 +
  \sigma' \|w\|_1 \right]
\end{align*}
A sub-gradient of the objective of the optimization problem above is
of the form $w-v +
\sigma' z = 0$, where $z$ is a vector with $z_i =
\sign(w_i)$, where if $w_i=0$ then $z_i \in [-1,1]$. Therefore, if $w$
is an optimal solution then for all $i$, either $w_i=0$ or $w_i = v_i
- \sigma' \sign(w_i)$. Furthermore, it is easy to
verify that if $w$ is an optimal solution then for all $i$, if $w_i
\neq 0$ then the sign
of $w_i$ must be the sign of $v_i$. Therefore, whenever $w_i \neq 0$
we have that  $w_i = v_i - \sigma' \sign(v_i)$. It
follows that in that case we must have $|v_i| >
\sigma'$. And, the other direction is also true,
namely, if $|v_i| > \sigma'$ then setting $w_i = v_i -
\sigma' \sign(v_i)$ leads to an objective value whose
$i$'th component is
\[
\frac{1}{2} \left(\sigma'\right)^2 + \sigma' (|v_i| -
\sigma') \le \frac{1}{2} |v_i|^2 ~,
\]
where the right-hand side is the $i$'th component of the objective value we will obtain by
setting $w_i=0$. This leads to the conclusion that 
\[
\nabla_i g^*(v) = \sign(v_i)\left[ |v_i| - \sigma'\right]_+ = \begin{cases}
v_i - \sigma' \sign(v_i) & \textrm{if}~ |v_i| >
\sigma' \\
0 & \textrm{o.w.}
\end{cases}
\]
It follows that 
\begin{align*}
g^*(v) &=  \sum_i \sign(v_i)\left[ |v_i| -
  \sigma'\right]_+  \, v_i - \frac{1}{2} \sum_i (\left[ |v_i| - \sigma'\right]_+)^2 - \sigma'
  \sum_i \left[ |v_i| - \sigma'\right]_+ \\
&= \sum_i \left[ |v_i| -
  \sigma'\right]_+ \left( |v_i| - \sigma' - \frac{1}{2}\left[ |v_i| -
    \sigma'\right]_+  \right) \\
&= \frac{1}{2} \sum_i \left(\left[ |v_i| - \sigma'\right]_+\right)^2 ~.
\end{align*}

Another regularization function we'll use in the accelerated procedure
is
\begin{equation} \label{eqn:gdefl1l2acc}
g(w) = \frac{1}{2} \|w\|_2^2 + \sigma'\, \|w\|_1 - z^\top w ~.
\end{equation}
The conjugate function is
\[
g^*(v) = \frac{1}{2} \sum_i \left(\left[ |v_i + z_i| - \sigma'\right]_+\right)^2 ~,
\]
and its gradient is
\[
\nabla_i g^*(v) = \sign(v_i + z_i)\left[ |v_i + z_i| - \sigma'\right]_+ 
\]

\subsection{Ridge Regression}

In ridge regression, we minimize the squared loss with $L_2$
regularization. That is, $g(w) = \frac{1}{2} \|w\|^2$ and for every
$i$ we have that $x_i \in \reals^d$ and $\phi_i(a) = \frac{1}{2}
(a-y_i)^2$ for some $y_i \in \reals$. The primal problem is therefore
\[
P(w) = \frac{1}{2n} \sum_{i=1}^n (x_i^\top w - y_i)^2  +
\frac{\lambda}{2} \|w\|^2 ~.
\]

Below we specify Prox-SDCA for ridge regression. We use Option I since
it is possible to derive a closed form solution to the maximization of
the dual with respect to $\Delta \alpha_i$. Indeed, since 
$-\phi_i^*(-b) = -\frac{1}{2} b^2 + y_i b $ we have that 
the maximization problem is
\begin{align*}
\Delta \alpha_i &=~ \argmax_{b} - \frac{1}{2} (\alpha^{(t+1)}_i + b)^2 + y_i
(\alpha^{(t+1)}_i + b) - w^{(t-1) \top} x_i b - \frac{b^2
  \|x_i\|^2}{2\lambda n} \\
&=~\argmax_{b} - \frac{1}{a} \left(1 + \frac{\|x_i\|^2}{2\lambda n} \right) b^2 
- \left( \alpha^{(t+1)}_i + w^{(t-1) \top} x_i - y_i\right) \,b\\
&=~ - \frac{\alpha^{(t+1)}_i  + w^{(t-1) \top} x_i - y_i}{1 +
  \frac{\|x_i\|^2}{2\lambda n} } ~.
\end{align*}

Applying the above update and using some additional tricks to improve
the running time we obtain the following procedure. 
\begin{myalgo}{Prox-SDCA($(x_i,y_i)_{i=1}^n,\epsilon,\alpha^{(0)},z$)
   for solving ridge regression} 
\textbf{Goal:} Minimize $P(w) = \frac{1}{2n} \sum_{i=1}^n
(x_i^\top w -y_i)^2 + \lambda \left(\frac{1}{2} \|w\|^2 -  w^\top z\right)$ \\
\textbf{Initialize} $v^{(0)}=\frac{1}{\lambda n} \sum_{i=1}^n 
\alpha_i^{(0)} x_i$, $\forall i,~ \tilde{y}_i = y_i - x_i^\top z$ \\
\textbf{Iterate:} for $t=1,2,\dots$ \+ \\
 Randomly pick $i$ \\
 $\Delta \alpha_i = - \frac{\alpha^{(t-1)}_i  + v^{(t-1) \top} x_i - \tilde{y}_i}{1 +
  \frac{\|x_i\|^2}{2\lambda n} } $ \\
  $\alpha^{(t)}_i \leftarrow \alpha^{(t-1)}_i + \Delta \alpha_i$ and
  for $j \neq i$, $\alpha^{(t)}_j \leftarrow \alpha^{(t-1)}_j$ \\
  $v^{(t)} \leftarrow v^{(t-1)} + \frac{\Delta \alpha_i}{\lambda n}x_i $ \- \\
  \textbf{Stopping condition}: \+ \\
Let $w^{(t)} = v^{(t)} + z$ \\
  Stop if $\frac{1}{2n} \sum_{i=1}^n \left( ( x_i^\top w^{(t)}-y_i)^2
    + (\alpha^{(t)}_i + y_i)^2 - y_i^2\right)  + \lambda w^{(t)\,^\top}
  v^{(t)} \le \epsilon$ 
  
\end{myalgo}

The runtime of Prox-SDCA for ridge regression becomes 
\[
\tilde{O}\left(d\left(n+
      \frac{R^2}{\lambda}\right)\right) ~,
\]
where $R = \max_i \|x_i\|$. This matches the recent results of
\cite{LSB12-sgdexp,ShalevZh2013}. If $R^2/\lambda \gg n$ we can apply
the accelerated procedure and obtain the improved runtime
\[
\tilde{O}\left(d\sqrt{\frac{nR^2}{\lambda}}\right) ~.
\]

\subsection{Logistic Regression}

In logistic regression, we minimize the logistic loss with $L_2$
regularization. That is, $g(w) = \frac{1}{2} \|w\|^2$ and for every
$i$ we have that $x_i \in \reals^d$ and $\phi_i(a) = log(1+e^a)$. The
primal problem is therefore\footnote{Usually, the training data comes
  with labels, $y_i \in \{\pm 1\}$,
  and the loss function becomes $\log(1+e^{-y_i x_i^\top
    w})$. However, we can easily get rid of the labels by re-defining $x_i \leftarrow -y_i x_i$.}
\[
P(w) = \frac{1}{n} \sum_{i=1}^n \log(1+e^{x_i^\top w})  +
\frac{\lambda}{2} \|w\|^2 ~.
\]
The dual problem is
\[
D(\alpha) = \frac{1}{n} \sum_{i=1}^n (\alpha_i\log(-\alpha_i) - (1+\alpha_i)\log(1+\alpha_i))  -
\frac{\lambda}{2} \|v(\alpha)\|^2 ~,
\]
and the dual constraints are $\alpha \in [-1,0]^n$. 

Below we specify Prox-SDCA for logistic regression using Option III.
\begin{myalgo}{Prox-SDCA($(x_i)_{i=1}^n,\epsilon,\alpha^{(0)},z$)
   for logistic regression} 
\textbf{Goal:} Minimize $P(w) = \frac{1}{n} \sum_{i=1}^n
\log(1 + e^{x_i^\top w}) + \lambda \left(\frac{1}{2} \|w\|^2 -  w^\top z\right)$ \\
\textbf{Initialize} $v^{(0)}=\frac{1}{\lambda n} \sum_{i=1}^n 
\alpha_i^{(0)} x_i$, and $\forall i,~~ p_i = x_i^\top z$ \\
\textbf{Define:} $\phi^*(b) = b\log(b)+(1-b)\log(1-b)$ \\
\textbf{Iterate:} for $t=1,2,\dots$ \+ \\
 Randomly pick $i$ \\
 $p = x_i^\top w^{(t-1)}$ \\
 $q = -1/(1+e^{-p}) -\alpha_i^{(t-1)}$ \\
 $s = \min\left(1,\frac{\log(1+e^p) + \phi^*(-\alpha_i^{(t-1)}) + p \alpha^{(t-1)}_i
  + 2 q^2}{ q^2 (4 + \frac{1}{\lambda n}
  \|x_i\|^2 )}\right)$ \\
 $\Delta \alpha_i = sq $ \\
  $\alpha^{(t)}_i = \alpha^{(t-1)}_i + \Delta \alpha_i$ and
  for $j \neq i$, $\alpha^{(t)}_j = \alpha^{(t-1)}_j$ \\
  $v^{(t)} = v^{(t-1)} + \frac{\Delta \alpha_i}{\lambda n}x_i
  $ \- \\
  \textbf{Stopping condition}: \+ \\
 let $w^{(t)} = v^{(t)} + z$ \\
  Stop if $\frac{1}{n} \sum_{i=1}^n \left( \log(1 + e^{x_i^\top
    w^{(t)}}) + \phi^*(-\alpha_i^{(t-1)}) \right) +
  \lambda w^{(t) \top} v^{(t)} \le \epsilon$ 
\end{myalgo}

The runtime analysis is similar to the analysis for ridge regression.

\subsection{Lasso}

In the Lasso problem, the loss function is the squared loss but the
regularization function is $L_1$. That is, we need to solve the problem:
\begin{equation} \label{eqn:Lasso}
\min_w \left[ \frac{1}{2n} \sum_{i=1}^n ( x_i^\top w - y_i)^2 + \sigma
  \|w\|_1 \right] ~,
\end{equation}
with a positive regularization parameter $\sigma \in \reals_+$.

Let $\bar{y} = \frac{1}{2n}
\sum_{i=1}^n y_i^2$, and let $\bar{w}$ be an optimal solution of
\eqref{eqn:Lasso}. Then, the objective at $\bar{w}$ is at most the
objective at $w=0$, which yields
\[
\sigma \|\bar{w}\|_1 \le \bar{y} ~~\Rightarrow~~ \|\bar{w}\|_2 \le \|\bar{w}\|_1
\le \frac{\bar{y}}{\sigma} ~.
\]
Consider the optimization problem
\begin{equation} \label{eqn:LassoL1L2}
\min_w P(w) ~~~\textrm{where}~~~ P(w)  = \frac{1}{2n} \sum_{i=1}^n ( x_i^\top w - y_i)^2 +
  \lambda \left( \half \|w\|_2^2 + \frac{\sigma}{\lambda}
  \|w\|_1 \right)  ~,
\end{equation}
for some $\lambda > 0$. This problem fits into our framework, since
now the regularizer is strongly convex.  Furthermore, if $w^*$
is an $(\epsilon/2)$-accurate solution to the problem in
\eqref{eqn:LassoL1L2},  then $P(w^*) \le P(\bar{w}) +\epsilon/2$ which yields
\[
\left[\frac{1}{2n} \sum_{i=1}^n ( x_i^\top w^* - y_i)^2 + \sigma \|w^*\|_1
\right] \le \left[
\frac{1}{2n} \sum_{i=1}^n ( x_i^\top \bar{w} - y_i)^2 + \sigma
\|\bar{w}\|_1 \right] + \frac{\lambda}{2} \|\bar{w}\|_2^2 + \epsilon/2~.
\]
Since $\|\bar{w}\|_2^2 \le \left({\bar{y}}/{\sigma} \right)^2$, we
obtain that setting $\lambda = \epsilon (\sigma/\bar{y})^2$ guarantees
that $w^*$ is an $\epsilon$ accurate solution to the original problem
given in \eqref{eqn:Lasso}.

In light of the above, from now on we focus on the problem given in
\eqref{eqn:LassoL1L2}. As in the case of ridge regression, we can
apply Prox-SDCA with Option I. The resulting pseudo-code is given
below.
Applying the above update and using some additional tricks to improve
the running time we obtain the following procedure. 
\begin{myalgo}{Prox-SDCA($(x_i,y_i)_{i=1}^n,\epsilon,\alpha^{(0)},z$)
   for solving $L_1-L_2$ regression} 
\textbf{Goal:} Minimize $P(w) = \frac{1}{2n} \sum_{i=1}^n
(x_i^\top w -y_i)^2 + \lambda \left(\frac{1}{2} \|w\|^2 +
 \sigma' \|w\|_1 -  w^\top z\right)$ \\
\textbf{Initialize} $v^{(0)}=\frac{1}{\lambda n} \sum_{i=1}^n 
\alpha_i^{(0)} x_i$, and $\forall j,~w^{(0)}_j = \sign(v_j^{(0)}+z_j)[
|v_j^{(0)}+z_j| - \sigma']_+$ \\
\textbf{Iterate:} for $t=1,2,\dots$ \+ \\
 Randomly pick $i$ \\
 $\Delta \alpha_i = - \frac{\alpha^{(t-1)}_i  + w^{(t-1) \top} x_i - y_i}{1 +
  \frac{\|x_i\|^2}{2\lambda n} } $ \\
  $\alpha^{(t)}_i = \alpha^{(t-1)}_i + \Delta \alpha_i$ and
  for $j \neq i$, $\alpha^{(t)}_j = \alpha^{(t-1)}_j$ \\
  $v^{(t)} = v^{(t-1)} + \frac{\Delta \alpha_i}{\lambda n}x_i
  $ \\
   $\forall j,~w^{(t)}_j = \sign(v_j^{(t)}+z_j)[
|v_j^{(t)}+z_j| - \sigma']_+$\- \\
  \textbf{Stopping condition}: \+ \\
  Stop if $\frac{1}{2n} \sum_{i=1}^n \left( (x_i^\top
    w^{(t)}-y_i)^2 - 2y_i\alpha^{(t)}_i + (\alpha^{(t)}_i)^2 \right) +
  \lambda w^{(t) \top} v^{(t)} \le \epsilon$ 
\end{myalgo}

Let us now discuss the runtime of the resulting method. 
Denote $R=\max_i \|x_i\|$ and for
simplicity, assume that $\bar{y} =
O(1)$. Choosing $\lambda =  \epsilon (\sigma/\bar{y})^2$, the runtime of our method becomes
\[
\tilde{O}\left(d\left(n+
  \min\left\{\frac{R^2}{\epsilon\,\sigma^2},\sqrt{\frac{nR^2}{\epsilon\,\sigma^2}}\right\}\right)\right)
~.
\]
It is also convenient to write the bound in terms of $B =
\|\bar{w}\|_2$, where, as before, $\bar{w}$ is the optimal solution of
the $L_1$ regularized problem. With this parameterization, we can set
$\lambda = \epsilon/B^2$ and the runtime becomes
\[
\tilde{O}\left(d\left(n+
  \min\left\{\frac{R^2B^2}{\epsilon},\sqrt{\frac{n\,R^2B^2}{\epsilon}}\right\}\right)\right)
~.
\]


The runtime of standard SGD is $O(dR^2 B^2 / \epsilon^2)$ even in the
case of smooth loss functions such as the squared loss. Several
variants of SGD, that leads to sparser intermediate solutions, have
been proposed
(e.g. \cite{LangfordLiZh09,shalev2011stochastic,Xiao10,duchi2009efficient,DuchiShSiTe10}). However,
all of these variants share the runtime of $O(dR^2 B^2 / \epsilon^2)$,
which is much slower than our runtime when $\epsilon$ is small.

Another relevant approach is the FISTA algorithm of 
\cite{beck2009fast}. The shrinkage operator of FISTA is the same as
the gradient of $g^*$ used in our approach.  
It is a batch algorithm using Nesterov's accelerated gradient technique.
For the squared loss
function, the runtime of FISTA is
\[
O\left( d\,n\, \sqrt{\frac{R^2B^2}{\epsilon }} \right)  ~.
\]
This bound is worst than our bound by a factor of at least $\sqrt{n}$. 

Another approach to solving \eqref{eqn:Lasso} is stochastic coordinate
descent over the primal problem. \cite{shalev2011stochastic} showed
that the runtime of this approach is
\[
O\left(\frac{dnB^2}{\epsilon}\right) ~,
\]
under the assumption that $\|x_i\|_\infty \le 1$ for all $i$. Similar results can also be found in \cite{Nesterov10}.

For our method, the runtime depends on $R^2 = \max_i \|x_i\|_2^2$. If
$R^2 = O(1)$ then the runtime of our method is much better than that
of \cite{shalev2011stochastic}. In the general case, if  $\max_i
\|x_i\|_\infty \le 1$ then $R^2 \le d$, which yields the runtime of
\[
\tilde{O}\left(d\left(n+
  \min\left\{\frac{dB^2}{\epsilon},\sqrt{\frac{n\,dB^2}{\epsilon}}\right\}\right)\right)
~.
\]
This is the same or better than \cite{shalev2011stochastic} whenever
$d = O(n)$. 

\subsection{Linear SVM}

Support Vector Machines (SVM) is an algorithm for learning a linear
classifier. Linear SVM (i.e., SVM with linear kernels) amounts to
minimizing the objective
\[
P(w) = \frac{1}{n} \sum_{i=1}^n [1 - x_i^\top w]_+ + \frac{\lambda}{2}
\|w\|^2 ~,
\]
where $[a]_+ = \max\{0,a\}$, and for every $i$, $x_i \in \reals^d$. 
This can be cast as the objective given in \eqref{eqn:PrimalProblem}
by letting the regularization be $g(w) = \frac{1}{2} \|w\|_2^2$,
and for every $i$, $\phi_i(a) = [1-a]_+$, is the hinge-loss. 

Let $R =\max_i \|x_i\|_2$. SGD enjoys the rate of
$O\left(\frac{1}{\lambda \epsilon}\right)$. Many software packages
apply SDCA and obtain the rate $\tilde{O}\left(n + \frac{1}{\lambda
    \epsilon}\right)$. We now show how our accelerated proximal SDCA
enjoys the rate $\tilde{O}\left(n + \sqrt{\frac{n}{\lambda
      \epsilon}}\right)$. This is significantly better than the rate
of SGD when $\lambda \epsilon < 1/n$. We note that a default setting
for $\lambda$, which often works well in practice, is $\lambda =
1/n$. In this case, $\lambda \epsilon = \epsilon/n \ll 1/n$.

Our first step is to smooth the hinge-loss. Let $\gamma = \epsilon$
and consider the smooth hinge-loss as defined in
\eqref{eqn:smoothhinge}. 
Recall that the smooth hinge-loss satisfies
\[
\forall a,~~\phi(a)-\gamma/2 \le \tilde{\phi}(a) \le \phi(a) ~.
\]
Let $\tilde{P}$ be the SVM objective while replacing the hinge-loss
with the smooth hinge-loss. Therefore,  for every $w'$ and $w$, 
\[
P(w')- P(w) \le \tilde{P}(w') - \tilde{P}(w) + \gamma/2 ~.
\]
It follows that if $w'$ is an $(\epsilon/2)$-optimal solution for
$\tilde{P}$, then it is $\epsilon$-optimal solution for $P$. 

For the smoothed hinge loss, the optimization problem given in Option
I of Prox-SDCA has a closed form solution and we obtain the following
procedure:

\begin{myalgo}{Prox-SDCA($(x_1,\ldots,x_n),\epsilon,\alpha^{(0)},z$)
   for solving SVM (with smooth hinge-loss as in \eqref{eqn:smoothhinge})} 
\textbf{Define:} $\tilde{\phi}_\gamma$ as in \eqref{eqn:smoothhinge} \\
\textbf{Goal:} Minimize $P(w) = \frac{1}{n} \sum_{i=1}^n
\tilde{\phi}_\gamma(x_i^\top w) + \lambda \left(\frac{1}{2} \|w\|^2 -  w^\top z\right)$ \\
\textbf{Initialize} $w^{(0)}=z + \frac{1}{\lambda n} \sum_{i=1}^n 
\alpha_i^{(0)} x_i$ \\
\textbf{Iterate:} for $t=1,2,\dots$ \+ \\
 Randomly pick $i$ \\
$\Delta \alpha_i  =  \max\left(-\alpha^{(t-1)}_i~,~ \min\left(1 -\alpha^{(t-1)}_i~,~ \frac{1 -  x_i^\top
      w^{(t-1)}  - \gamma\,\alpha^{(t-1)}_i}{\|x_i\|^2/(\lambda n)+\gamma} \right) \right) $\\
  $\alpha^{(t)}_i \leftarrow \alpha^{(t-1)}_i + \Delta \alpha_i$ and
  for $j \neq i$, $\alpha^{(t)}_j \leftarrow \alpha^{(t-1)}_j$ \\
  $w^{(t)} \leftarrow w^{(t-1)} + \frac{\Delta \alpha_i}{\lambda n}x_i $ \- \\
  \textbf{Stopping condition}: \+ \\
  Stop if $\frac{1}{n} \sum_{i=1}^n \left(
    \tilde{\phi}_\gamma(x_i^\top w^{(t)}) - \alpha^{(t)}_i +
    \frac{\gamma}{2} (\alpha^{(t)}_i)^2 \right) + \lambda w^{(t)^\top}
  (w^{(t)}-z) \le \epsilon$
\end{myalgo}

Denote $R=\max_i \|x_i\|$. Then, the runtime of the resulting method is
\[
\tilde{O}\left(d\left(n+
  \min\left\{\frac{R^2}{\gamma\,\lambda},\sqrt{\frac{nR^2}{\gamma\,\lambda}}\right\}\right)\right)
~.
\]
In particular, choosing $\gamma = \epsilon$ we obtain a solution to
the original SVM problem in runtime of 
\[
\tilde{O}\left(d\left(n+
  \min\left\{\frac{R^2}{\epsilon\,\lambda},\sqrt{\frac{nR^2}{\epsilon\,\lambda}}\right\}\right)\right)
~.
\]
As mentioned before, this is better than SGD when $\frac{1}{\lambda
  \epsilon} \gg n$.

\subsection{Multiclass SVM}

Next we consider Multiclass SVM using the construction described in
\citet{CrammerSi01a}. Each example consists of an instance vector $x_i \in
\reals^d$ and a label $y_i \in \{1,\ldots,k\}$. The goal is to learn a
matrix $W \in \reals^{d,k}$ such that $W^\top x_i$ is a $k$'th dimensional
vector of scores for the different classes. The prediction is the
coordinate of $W^\top x_i$ of maximal value. The loss function is 
\[
\max_{j \neq y_i}  (1 + (W^\top x_i)_j - (W^\top x_i)_{y_i} ) ~.
\]
This can be written as $\phi((W^\top x_i) - (W^\top x_i)_{y_i})$ where
\[
\phi_i(a) = \max_j [c_{i,j} + a_j]_+ ~,
\] 
with $c_i$ being the all ones vector except $0$ in the $y_i$
coordinate. 

We can model this in our framework as follows. Given a matrix $M$ let
$\textrm{vec}(M)$ be the column vector obtained by concatenating the columns
of $M$. Let $e_j$ be the all zeros vector except $1$ in the $j$'th coordinate.
For every $i$, let $c_i = \mathbf{1} - e_{y_i}$ and 
let $X_i \in \reals^{dk,k}$ be the matrix whose $j$'th column
is $\textrm{vec}(x_i (e_j-e_{y_i})^\top)$. Then, 
\[
X_i^\top \textrm{vec}(W) = W^\top x_i - (W^\top x_i)_{y_i} ~.
\]
Therefore, the optimization problem of multiclass SVM becomes:
\[
\min_{w \in \reals^{dk}} P(w) ~~~\textrm{where}~~~
P(w) = \frac{1}{n} \sum_{i=1}^n \phi_i(X_i^\top w)  +
\frac{\lambda}{2} \|w\|^2 ~.
\]

As in the case of SVM, we will use the smooth version of the
max-of-hinge loss function as described in \eqref{eqn:SmoothMaxOfHingeConj}.
If we set the smoothness parameter $\gamma$ to be $\epsilon$ then an
$(\epsilon/2)$-accurate solution to the problem with the smooth loss
is also an $\epsilon$-accurate solution to the original problem with
the non-smooth loss. Therefore, from now on we focus on the problem
with the smooth max-of-hinge loss.

We specify Prox-SDCA for multiclass SVM using Option I. We will show
that the optimization problem in Option I can be calculated
efficiently by sorting a $k$ dimensional vector. Such ideas were
explored in \cite{CrammerSi01a} for the non-smooth max-of-hinge loss. 

Let $\hat{w} = w - \frac{1}{\lambda n} X_i \alpha^{(t-1)}_i$. Then, the optimization
problem over $\alpha_i$ can be written as
\begin{equation} \label{eqn:OptDualForMultiVector}
\argmax_{\alpha_i : -\alpha_i \in S} ~~(-c_i^\top - \hat{w}^\top X_i) \alpha_i
  - \frac{\gamma}{2} \|\alpha_i\|^2  - \frac{1}{2\lambda n} \|X_i \alpha_i \|^2 ~.
\end{equation}
As shown before, if we organize $\hat{w}$ as a $d \times k$ matrix, denoted $\hat{W}$,
we have that $X_i^\top \hat{w} = \hat{W}^\top x_i - (\hat{W}^\top x_i)_{y_i}$.  We also have that
\[
X_i \alpha_i = \sum_j \textrm{vec}(x_i (e_j-e_{y_i})^\top)
\alpha_{i,j} =  \textrm{vec}(x_i \sum_j \alpha_{i,j} (e_j - e_{y_i})^\top)
= \textrm{vec}(x_i (\alpha_i  - \|\alpha_i\|_1 e_{y_i})^\top) ~.
\]
It follows that an optimal solution to
\eqref{eqn:OptDualForMultiVector} must set $\alpha_{i,y_i} = 0$ and we
only need to optimize over the rest of the dual variables. This also
yields,
\[
\|X_i\alpha_i\|^2 = \|x_i\|^2 \|\alpha_i\|_2^2 +
\|x_i\|^2 \|\alpha_i\|_1^2 ~.
\]
So, \eqref{eqn:OptDualForMultiVector} becomes: 
\begin{equation} \label{eqn:OptDualForMultiVector2}
\argmax_{\alpha_i : -\alpha_i \in S, \alpha_{i,y_i}=0} ~~(-c_i^\top - \hat{w}^\top X_i) \alpha_i
  - \frac{\gamma}{2} \|\alpha_i\|_2^2  - \frac{\|x_i\|^2}{2\lambda n}
  \|\alpha_i \|_2^2 
- \frac{\|x_i\|^2}{2\lambda n} \|\alpha_i \|_1^2 ~.
\end{equation}
This is equivalent to a problem of the form:
\begin{equation} \label{eqn:aProxForMC}
\argmin_{a \in \reals_+^{k-1}, \beta}   \|a - \mu\|_2^2 +
C \beta^2 ~~~\textrm{s.t.}~~~  \|a\|_1 = \beta \le 1 ~,
\end{equation}
where
\[
\mu = \frac{c_i^\top + \hat{w}^\top X_i}{\gamma + \frac{\|x_i\|^2}{\lambda
    n}} ~~~\textrm{and}~~~ C = \frac{\frac{\|x_i\|^2}{\lambda
    n}}{\gamma + \frac{\|x_i\|^2}{\lambda
    n}} = \frac{1}{ \frac{\gamma \lambda n}{\|x_i\|^2} + 1} ~.
\]
The equivalence is in the sense that if $(a,\beta)$ is a solution of
\eqref{eqn:aProxForMC} then we can set $\alpha_i = -a$.

Assume for simplicity that $\mu$ is sorted in a non-increasing order
and that all of its elements are non-negative (otherwise, it is easy
to verify that we can zero the negative elements of $\mu$ and sort the
non-negative, without affecting the solution). Let $\bar{\mu}$ be the
cumulative sum of $\mu$, that is, for every $j$, let $\bar{\mu}_j =
\sum_{r=1}^j \mu_r$.  For every $j$, let $z_j = \bar{\mu}_j - j
\mu_j$. Since $\mu$ is sorted we have that
\[
z_{j+1} = \sum_{r=1}^{j+1} \mu_r - (j+1) \mu_{j+1} = 
\sum_{r=1}^j \mu_r
- j \mu_{j+1} \ge \sum_{r=1}^j \mu_r
- j \mu_{j} = z_j ~.
\]
Note also that $z_1 = 0$ and that $z_{k} = \bar{\mu}_k = \|\mu\|_1$
(since the coordinate of $\mu$ that corresponds to $y_i$ is zero).  By
the properties of projection onto the simplex (see
\cite{duchi2008efficient}), for every $z \in (z_j,z_{j+1})$ we have
that the projection of $\mu$ onto the set $\{b \in \reals_+^k :
\|b\|_1=z\}$ is of the form $a_r = \max\{0,\mu_r - \theta/j\}$ where
$\theta = (-z + \bar{\mu}_j)/j$. Therefore, the objective becomes
(ignoring constants that do not depend on $z$),
\[
j \theta^2 + C z^2 = (-z + \bar{\mu}_j)^2/j + C z^2 ~.
\]
The first order condition for minimality w.r.t. $z$ is
\[
-(-z + \bar{\mu}_j)/j + Cz = 0 ~~\Rightarrow~~
z = \frac{\bar{\mu}_j}{1 + jC} ~.
\]
If this value of $z$ is in $(z_j,z_{j+1})$, then it is the optimal $z$
and we're done. Otherwise, the optimum should be either $z=0$ (which
yields $\alpha=0$ as well) or $z=1$. 

\begin{myalgo}{$a = \textrm{OptimizeDual}(\mu,C)$ }
\textbf{Solve} the optimization problem given in
\eqref{eqn:aProxForMC} \\
\textbf{Initialize:} $\forall i, ~ \hat{\mu}_i = \max\{0,\mu_i\}$,
and sort $\hat{\mu}$ s.t. $\hat{\mu}_1 \ge \hat{\mu}_2 \ge \ldots \ge \hat{\mu}_k$ \\
\textbf{Let:} $\bar{\mu}$ be s.t. $\bar{\mu}_j = \sum_{i=1}^j \hat{\mu}_i$
\\
\textbf{Let:} $z$ be s.t. $z_j = \min\{\bar{\mu}_j - j \hat{\mu}_j,1\}$ and
$z_{k+1} = 1$\\
\textbf{If:} $\exists j$ s.t. $\frac{\bar{\mu}_j}{1 + jC} \in
[z_j,z_{j+1}]$ \+ \\
 return $a$ s.t. $\forall i,~ a_i = \max\left\{0,\mu_i - \left(-\frac{\bar{\mu}_j}{1 + jC}  +
 \bar{\mu}_j\right)/j\right\}$ \- \\
\textbf{Else:} \+ \\
 Let $j$ be the minimal index s.t. $z_j=1$ \\
set $a$ s.t. $\forall i,~~a_i = \max\{0,\mu_i
 - (-z_j + \bar{\mu}_j)/j\}$ \\
\textbf{If:} $\|a-\mu\|^2 + C  \le \|\mu\|^2$ \+ \\
 return $a$ \- \\
\textbf{Else:} \+ \\
 return $(0,\ldots,0)$ 
\end{myalgo}

The resulting pseudo-codes for Prox-SDCA is given below. We specify
the procedure while referring to $W$ as a matrix, because it is the
more natural representation. For convenience of the code, we also
maintain in $\alpha_{i,y_i}$ the value of $-\sum_{j \neq y_i}
\alpha_{i,j}$ (instead of the optimal value of $0$).

\begin{myalgo}{Prox-SDCA($(x_1,y_1)_{i=1}^n,\epsilon,\alpha,Z$)
   for solving Multiclass SVM (with smooth hinge-loss as in \eqref{eqn:SmoothMaxOfHingeConj})} 
\textbf{Define:} $\tilde{\phi}_\gamma$ as in
\eqref{eqn:SmoothMaxOfHingeConj}  \\
\textbf{Goal:} Minimize \+ \\
$P(W) = \frac{1}{n} \sum_{i=1}^n
\tilde{\phi}_\gamma((W^\top x_i) - (W^\top x_i)_{y_i}) + \lambda
\left(\frac{1}{2} \textrm{vec}(W)^\top \textrm{vec}(W) -
  \textrm{vec}(W)^\top \textrm{vec}(Z)\right)$ \- \\
\textbf{Initialize} $W=Z + \frac{1}{\lambda n} \sum_{i=1}^n 
x_i \alpha_i^\top$ \\
\textbf{Iterate:} for $t=1,2,\dots$ \+ \\
 Randomly pick $i$ \\
$\hat{W} = W - \frac{1}{\lambda n} x_i \alpha^{\top}_i$ \\
$p = x_i^\top \hat{W}$, ~ $p = p - p_{y_i}$,~ $c = \mathbf{1} -
e_{y_i}$, ~$\mu = \frac{c + p}{\gamma + \|x_i\|^2/(\lambda n)} $, 
$ C = \frac{1}{1 + \gamma \lambda n / \|x_i\|^2}$ \\
$a = \textrm{OptimizeDual}(\mu,C)$ \\
$\alpha_i = -a$, $\alpha_{y_i} = \|a\|_1$ \\
$W = \hat{W} +  \frac{1}{\lambda n} x_i \alpha^{\top}_i$ \\
\textbf{Stopping condition}: \+ \\
 let $G = 0$ \\
 for $i=1,\ldots,n$ \+ \\
          $a = W^\top x_i$, $a = a-a_{y_i}$,  $c = \mathbf{1} -
          e_{y_i}$, $b = \textrm{Project}((a+c)/\gamma)$ \\
          $G = G + \frac{\gamma}{2} (\|(a+c)/\gamma\|^2 -
          \|b-(a+c)/\gamma\|^2) + c^\top \alpha^{(t)}_i +
          \frac{\gamma}{2} (\|\alpha^{(t)}_i\|^2 -
          (\alpha^{(t)}_{i,y_i})^2)$ \- \\
  Stop if $G/n + \lambda \textrm{vec}(W)^\top \textrm{vec}(W-Z) \le \epsilon$
\end{myalgo}

\section{Experiments}

\begin{figure}
\begin{center}
\begin{tabular}{ @{} L | @{} S @{} S @{} S @{} }
$\lambda$ & \scriptsize{astro-ph} & \scriptsize{cov1} &
\scriptsize{CCAT}\\ \hline

\input{graphCode.tex}

\end{tabular}
\end{center}
\caption{\label{fig:conv}Comparing Accelerated-Prox-SDCA, Prox-SDCA,
  and FISTA for minimizing the smoothed hinge-loss ($\gamma=1$)
  with $L_1-L_2$ regularization ($\sigma = 10^{-5}$ and $\lambda$
  varies in $\{10^{-6},\ldots,10^{-9}\}$). In each of these plots, the
  y-axis is the primal objective and the x-axis is the number of
  passes through the entire training set. The three columns
  corresponds to the three data sets. The methods are terminated
  either if stopping condition is met (with $\epsilon = 10^{-3}$) or
  after 100 passes over the data.}
\end{figure}

In this section we compare Prox-SDCA, its accelerated version
Accelerated-Prox-SDCA, and the FISTA algorithm of \cite{beck2009fast},
on $L_1-L_2$ regularized loss minimization problems.

The experiments were performed on three large datasets with very
different feature counts and sparsity, which were kindly provided by
Thorsten Joachims (the datasets were also used in \cite{ShZh12-sdca}).
The astro-ph dataset classifies abstracts of papers from the physics
ArXiv according to whether they belong in the astro-physics section;
CCAT is a classification task taken from the Reuters RCV1 collection;
and cov1 is class 1 of the covertype dataset of Blackard, Jock \&
Dean. The following table provides details of the dataset
characteristics.
\begin{center}
\begin{tabular}{|r|c|c|c|c|}
        \hline
Dataset & Training Size & Testing Size & Features & Sparsity  \\ \hline 
astro-ph & $29882$ & $32487$ & $99757$ & $0.08\%$ \\
CCAT & $781265$ & $23149$ & $47236$ & $0.16\%$ \\
cov1 & $522911$ & $58101$ & $54$ & $22.22\%$ \\
\hline
\end{tabular}
\end{center}

These are binary classification problems, with each $x_i$ being a
vector which has been normalized to be $\|x_i\|_2=1$, and $y_i$ being
a binary class label of $\pm 1$. We multiplied each $x_i$ by $y_i$ and following \cite{ShZh12-sdca}, we employed the smooth hinge loss,
$\tilde{\phi}_\gamma$, as in \eqref{eqn:smoothhinge}, with $\gamma=1$.
The optimization problem we need to solve is therefore 
\[
\min_w P(w) ~~~\textrm{where}~~~
P(w) = \frac{1}{n} \sum_{i=1}^n
\tilde{\phi}_\gamma(x_i^\top w) + \frac{\lambda}{2} \|w\|_2^2 + \sigma
\|w\|_1 ~.
\]
In the experiments, we set $\sigma=10^{-5}$ and vary $\lambda$ in the
range $\{10^{-6}, 10^{-7}, 10^{-8}, 10^{-9}\}$. 

The convergence behaviors are plotted in Figure~\ref{fig:conv}. In all
the plots we depict the primal objective as a function of the number
of passes over the data (often referred to as ``epochs''). For FISTA,
each iteration involves a single pass over the data. For Prox-SDCA,
each $n$ iterations are equivalent to a single pass over the
data. And, for Accelerated-Prox-SDCA, each $n$ inner iterations are
equivalent to a single pass over the data. For Prox-SDCA and
Accelerated-Prox-SDCA we implemented their corresponding stopping
conditions and terminate the methods once an accuracy of $10^{-3}$ was
guaranteed.

It is clear from the graphs that Accelerated-Prox-SDCA yields the best
results, and often significantly outperform the other
methods. Prox-SDCA behaves similarly when $\lambda$ is relatively
large, but it converges much slower when $\lambda$ is small. This is
consistent with our theory. Finally, the relative performance of FISTA
and Prox-SDCA depends on the ratio between $\lambda$ and $n$, but in
all cases, Accelerated-Prox-SDCA is much faster than FISTA. This is
again consistent with our theory.

\section{Discussion and Open Problems}

We have described and analyzed a proximal stochastic dual coordinate
ascent method and have shown how to accelerate the procedure. The
overall runtime of the resulting method improves state-of-the-art
results in many cases of interest. 

There are two main open problems that we leave to future research.
\begin{openProblem}
  When $\frac{1}{\lambda \gamma}$ is larger than $n$, the runtime of
  our procedure becomes $\tilde{O}\left(d\sqrt{\frac{n}{\lambda
        \gamma}}\right)$.  Is it possible to derive a method whose
  runtime is $\tilde{O}\left(d\left(n+\sqrt{\frac{1}{\lambda
          \gamma}}\right)\right)$ ?
\end{openProblem}

\begin{openProblem}
  Our Prox-SDCA procedure and its analysis works for regularizers
  which are strongly convex with respect to an arbitrary
  norm. However, our acceleration procedure is designed for
  regularizers which are strongly convex with respect to the Euclidean
  norm.  Is is possible to extend the acceleration procedure to more
  general regularizers?
\end{openProblem}

\section*{Acknowledgements}
The authors would like to thank Fen Xia for careful proof-reading of the paper which helped us to correct numerous typos.
Shai Shalev-Shwartz is supported by the following grants: Intel
Collaborative Research Institute for Computational Intelligence
(ICRI-CI) and ISF 598-10. Tong Zhang is supported by the following
grants: NSF IIS-1016061, NSF DMS-1007527, and NSF IIS-1250985.

\appendix

\section{Proofs of Iteration Bounds for Prox-SDCA}

The proof technique follows that of \citet{ShalevZh2013}, but with the
required generality for handling general strongly convex regularizers
and smoothness/Lipschitzness with respect to general norms.

We prove the theorems for running Prox-SDCA while choosing $\Delta
\alpha_i$ as in Option I. A careful examination of the proof easily
reveals that the results hold for the other options as well. More
specifically, Lemma~\ref{lem:key} only requires choosing $\Delta
\alpha_i = s (u_i^{(t-1)}-\alpha_i^{(t-1)})$ as in \eqref{eqn:PC1},
and Option III chooses $s$ to optimize the bound on the right hand
side of \eqref{eqn:PC3}, and hence ensures that the choice can do no
worse than the result of Lemma~\ref{lem:key} with any $s$. The
simplification in Option IV and V employs the specific simplification
of the bound in Lemma~\ref{lem:key} in the proof of the theorems.


The key lemma is the following:
\begin{lemma} \label{lem:key} Assume that $\phi^*_i$ is
  $\gamma$-strongly-convex. For any iteration $t$, let $\E_{t}$ denote
  the expectation with respect to the randomness in choosing $i$ at
  round $t$, conditional on the value of $\alpha^{(t-1)}$. 
Then, for any iteration $t$ and any $s \in
  [0,1]$ we have
\[
\E_t[D(\alpha^{(t)})-D(\alpha^{(t-1)})] \ge  \frac{s}{n}\,
 [P(w^{(t-1)})-D(\alpha^{(t-1)})] - \left(\frac{s}{n}\right)^2
\frac{G^{(t)}}{2\lambda} ~,
\]
where
\[
G^{(t)} = \frac{1}{n} \sum_{i=1}^n \left(\|X_i\|_{D\to D'}^2 -
      \frac{\gamma(1-s)\lambda n}{s}\right) \; \E_t \left[\|u^{(t-1)}_i-\alpha^{(t-1)}_i\|_D^2\right] ,
\]
and $-u^{(t-1)}_i = \nabla \phi_i(X_i^\top w^{(t-1)})$. 
\end{lemma}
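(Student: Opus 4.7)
The plan is to exploit that Option I maximizes the proximal lower bound on the dual, so the expected dual increase it achieves dominates that of any specific candidate step. I will use the candidate $\Delta \alpha_i = s(u^{(t-1)}_i - \alpha^{(t-1)}_i)$ for an arbitrary fixed $s \in [0,1]$, bound the dual increase along this direction from below, and then average over the random coordinate $i$ chosen at round $t$.

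For fixed $i$ the dual $D$ changes only through the $i$-th summand $-\phi^*_i$ and through $-\lambda g^*(v)$, with $v$ shifted by $(\lambda n)^{-1} X_i \Delta \alpha_i$. Since $\phi_i$ is $(1/\gamma)$-smooth w.r.t.\ $\|\cdot\|_P$, its conjugate $\phi_i^*$ is $\gamma$-strongly convex w.r.t.\ $\|\cdot\|_D$, giving
\[
\phi_i^*(-\alpha_i - s(u_i-\alpha_i)) \le (1-s)\phi_i^*(-\alpha_i) + s\phi_i^*(-u_i) - \frac{\gamma\, s(1-s)}{2}\,\|u_i-\alpha_i\|_D^2.
\]
Since $g$ is $1$-strongly convex w.r.t.\ $\|\cdot\|_{P'}$, its conjugate $g^*$ is $1$-smooth w.r.t.\ $\|\cdot\|_{D'}$, so with $w^{(t-1)} = \nabla g^*(v^{(t-1)})$,
\[
g^*\!\bigl(v^{(t-1)} + (\lambda n)^{-1} X_i \Delta \alpha_i\bigr) \le g^*(v^{(t-1)}) + w^{(t-1)\top}(\lambda n)^{-1} X_i \Delta \alpha_i + \tfrac{1}{2}\bigl\|(\lambda n)^{-1} X_i \Delta \alpha_i\bigr\|_{D'}^2.
\]
Combining these two bounds gives a pointwise lower bound on the dual increase for each $i$.

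Next I apply Fenchel--Young at $-u_i = \nabla \phi_i(X_i^\top w^{(t-1)})$, which yields $-\phi_i^*(-u_i) = \phi_i(X_i^\top w^{(t-1)}) + u_i^\top X_i^\top w^{(t-1)}$, so that the $s\, u_i^\top X_i^\top w^{(t-1)}$ contribution cancels against the linear piece coming from the $g^*$-smoothness bound, leaving an $s$-coefficient of $\phi_i(X_i^\top w^{(t-1)}) + \phi_i^*(-\alpha_i) + \alpha_i^\top X_i^\top w^{(t-1)}$. Averaging this over $i$ and invoking Fenchel--Young for $g$ at $w^{(t-1)} = \nabla g^*(v^{(t-1)})$, which gives $g(w^{(t-1)}) + g^*(v^{(t-1)}) = v^{(t-1)\top} w^{(t-1)} = \frac{1}{\lambda n}\sum_i \alpha_i^\top X_i^\top w^{(t-1)}$, identifies this average exactly with $P(w^{(t-1)}) - D(\alpha^{(t-1)})$, producing the leading $\frac{s}{n}[P(w^{(t-1)}) - D(\alpha^{(t-1)})]$ term. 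The remaining quadratic pieces are the negative $g^*$-smoothness term and the positive strong-convexity term; bounding $\|X_i(u_i - \alpha_i)\|_{D'} \le \|X_i\|_{D \to D'}\,\|u_i - \alpha_i\|_D$ and collecting $\|u_i - \alpha_i\|_D^2$ factors with the right $s$-weights produces precisely $-(s/n)^2 G^{(t)}/(2\lambda)$ after rescaling.

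The substantive content is the two convexity inequalities above; the main obstacle is purely bookkeeping, namely tracking the linear $w^{(t-1)\top} X_i(u_i - \alpha_i)$ term that arises twice (once from the Fenchel--Young substitution for $\phi_i^*(-u_i)$ and once from the $g^*$ first-order bound) and ensuring its cancellation leaves no spurious residual, so that after averaging over $i$ the bound separates cleanly into the duality gap plus a single quadratic remainder of the stated form.
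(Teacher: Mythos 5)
Your proposal follows essentially the same route as the paper's proof: lower-bound Option~I's dual increase by the candidate step $\Delta\alpha_i = s(u_i-\alpha_i)$, apply $\gamma$-strong convexity of $\phi_i^*$ and $1$-smoothness of $g^*$, use the Fenchel--Young equalities at $-u_i=\nabla\phi_i(X_i^\top w^{(t-1)})$ and $w^{(t-1)}=\nabla g^*(v^{(t-1)})$ to recover the duality gap after averaging over $i$, and bound the quadratic remainder via the operator norm. The bookkeeping you outline (cancellation of the linear terms and collection of the $\|u_i-\alpha_i\|_D^2$ coefficients) matches the paper's derivation of inequality \eqref{eqn:PC3} and the subsequent expectation step, so the argument is correct.
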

\begin{proof}
Since only the $i$'th element of $\alpha$ is updated, the improvement in the dual objective can be written as
\begin{align*}
& n[D(\alpha^{(t)}) - D(\alpha^{(t-1)})] \\
= &
\left(-\phi^*(-\alpha^{(t)}_i) - \lambda n g^*\left(v^{(t-1)} + (\lambda
  n)^{-1} X_i \Delta \alpha_i\right) \right) -
\left(-\phi^*(-\alpha^{(t-1)}_i) - \lambda n g^*\left(v^{(t-1)}\right)
\right) 
\end{align*}
The smoothness of $g^*$ implies that 
$g^*(v+ \Delta v) \leq h(v;\Delta v)$, where 
$h(v;\Delta v)
 := g^*(v) + \nabla g^*(v)^\top \Delta v +
\frac{1}{2} \|\Delta v\|_{D'}^2$. Therefore, 
\begin{align*}
& n[D(\alpha^{(t)}) - D(\alpha^{(t-1)})] \\
\geq &
\underbrace{\left(-\phi^*(-\alpha^{(t)}_i) - \lambda n h\left(v^{(t-1)}; (\lambda
  n)^{-1} X_i \Delta \alpha_i\right)\right) }_A -
\underbrace{\left(-\phi^*(-\alpha^{(t-1)}_i) - \lambda n g^*\left(v^{(t-1)}\right) \right)}_B .
\end{align*}

By the definition of the update we have for all $s \in [0,1]$ that
\begin{align} \nonumber
A &=  \max_{\Delta \alpha_i} -\phi^*(-(\alpha^{(t-1)}_i + \Delta\alpha_i)) - 
\lambda n h\left(v^{(t-1)}; (\lambda
  n)^{-1} X_i \Delta \alpha_i\right) \\
&\ge -\phi^*(-(\alpha^{(t-1)}_i + s(u^{(t-1)}_i - \alpha^{(t-1)}_i) ))
- \lambda n
h(v^{(t-1)}; (\lambda n)^{-1} s X_i (u^{(t-1)}_i -\alpha^{(t-1)}_i)) .
\label{eqn:PC1}
\end{align}

From now on, we omit the superscripts and subscripts. 
Since $\phi^*$ is $\gamma$-strongly convex, we have that
\begin{equation} \label{eqn:PC2}
\phi^*(-(\alpha+ s(u - \alpha) )) = \phi^*(s (-u) + (1-s) (-\alpha))
\le s \phi^*(-u) + (1-s) \phi^*(-\alpha) - \frac{\gamma}{2} s (1-s) \|u-\alpha\|_D^2
\end{equation}
Combining this with \eqref{eqn:PC1} and rearranging terms we obtain that
\begin{align*} 
A &\ge -s \phi^*(-u) - (1-s) \phi^*(-\alpha) + \frac{\gamma}{2} s (1-s)
\|u-\alpha\|_D^2
- \lambda n
  h(v; (\lambda n)^{-1} s X(u - \alpha) )  \\
&= -s \phi^*(-u) - (1-s) \phi^*(-\alpha) + \frac{\gamma}{2} s (1-s)
\|u-\alpha\|_{D}^2
- \lambda n g^*(v) - s w^\top X (u-\alpha) - 
\frac{s^2 \|X(u-\alpha)\|_{D'}^2}{2\lambda n} \\
&\ge -s(\phi^*(-u)+w^\top X u) + (-\phi^*(-\alpha) - \lambda
  n g^*(v)) \\
&~~~~~~~~~~+ \frac{s}{2}\left(\gamma(1-s)-\frac{s
  \|X \|_{D\to D'}^2}{\lambda n}\right)\|u-\alpha\|_D^2 + s(\phi^*(-\alpha)+
w^\top X \alpha) .
\end{align*}
Since $-u = \nabla \phi(X^\top w)$ we have 
$\phi^*(-u) +  w^\top X u = - \phi(X^\top w)$, which yields
\begin{equation} \label{eqn:PC3}
A-B \ge s\left[\phi(X^\top w) + \phi^*(-\alpha) + w^\top X \alpha +
\left(\frac{\gamma(1-s)}{2} - \frac{s
  \|X\|_{D\to D'}^2}{2\lambda n}\right) \|u-\alpha\|_D^2 \right] ~.
\end{equation}
Next note that with $w=\nabla g^*(v)$, we have $g(w)+g^*(v)= w^\top v$. Therefore:
\begin{align*} 
P(w)-D(\alpha) &= \frac{1}{n} \sum_{i=1}^n \phi_i(X_i^\top w) +
  \lambda g(w) - \left(-\frac{1}{n} \sum_{i=1}^n
  \phi^*_i(-\alpha_i) - \lambda g^*(v) \right) \\
&= \frac{1}{n} \sum_{i=1}^n \phi_i(X_i^\top w) 
+ \frac{1}{n} \sum_{i=1}^n \phi^*_i(-\alpha_i)  + \lambda w^\top v \\
&= \frac{1}{n} \sum_{i=1}^n \left( \phi_i(X_i^\top w) +
  \phi^*_i(-\alpha_i) 
+ w^\top X_i \alpha_i \right)  .
\end{align*}
Therefore, if we take expectation of \eqref{eqn:PC3} w.r.t. the choice
of $i$ we obtain that
\[
\frac{1}{s}\, \E_t[A-B] \ge  [P(w)-D(\alpha)] - \frac{s}{2\lambda
    n} \cdot \underbrace{\frac{1}{n} \sum_{i=1}^n \left(\|X_i\|_{D\to D'}^2 -
      \frac{\gamma(1-s)\lambda n}{s}\right) \E_t[\|u_i-\alpha_i\|_D^2] }_{= G^{(t)}} .
\]
We have obtained that
\begin{equation} \label{eqn:DualSObyGap}
\frac{n}{s}\, \E_t[D(\alpha^{(t)})-D(\alpha^{(t-1)})] \ge
[P(w^{(t-1)})-D(\alpha^{(t-1)})] - \frac{s\,G^{(t)}}{2\lambda n} ~.
\end{equation}
Multiplying both sides by $s/n$ concludes the proof of the lemma.
\end{proof}


Equipped with the above lemmas we are ready to prove
\thmref{thm:smooth} and \thmref{thm:HighProbsmooth}. 

\begin{proof}[Proof of \thmref{thm:smooth}]
The assumption that $\phi_i$ is $(1/\gamma)$-smooth implies that
$\phi_i^*$ is $\gamma$-strongly-convex. 
We will apply \lemref{lem:key} with 
\[
s =  \frac{n}{n + R^2/(\lambda \gamma)} =
\frac{\lambda n \gamma}{R^2 + \lambda n \gamma } \in [0,1] ~.
\] Recall that
$\|X_i\|_{D\to D'} \le R$. Therefore, 
the choice of $s$ implies that 
\[
\|X_i\|_{D\to D'}^2 -
      \frac{\gamma(1-s)\lambda n}{s} \le R^2 - \frac{1-s}{s/(\lambda
        n \gamma )} = R^2 - R^2 = 0 ~,
\] and hence $G^{(t)} \le 0$ for
all $t$. This yields, 
\begin{equation} \label{eqn:lem1CorForSmooth}
\E_t[D(\alpha^{(t)})-D(\alpha^{(t-1)})] \ge  \frac{s}{n}\,
(P(w^{(t-1)})-D(\alpha^{(t-1)})) ~.
\end{equation}
Taking expectation
of both sides with respect to the randomness at previous rounds, and
using the law of total expectation, we obtain that
\begin{equation} \label{eqn:Ialsoneedthis}
\E[D(\alpha^{(t)})-D(\alpha^{(t-1)})] \ge  \frac{s}{n}\,
\E[P(w^{(t-1)})-D(\alpha^{(t-1)})] ~.
\end{equation}
But since $\epsilon_D^{(t-1)} := D(\alpha^*)-D(\alpha^{(t-1)}) \le P(w^{(t-1)})-D(\alpha^{(t-1)})$ and $D(\alpha^{(t)})-D(\alpha^{(t-1)})
= \epsilon_D^{(t-1)} - \epsilon_D^{(t)}$, we obtain that 
\[
\E[ \epsilon_D^{(t)} ] \le \left(1 -
  \tfrac{s}{n}\right)\E[\epsilon_D^{(t-1)}] \le \left(1 -
  \tfrac{s}{n}\right)^t \,\epsilon_D^{(0)} \le
\epsilon_D^{(0)}\,e^{-\frac{st}{n}}~.
\]
Therefore, whenever
\[
t \ge \frac{n}{s}\,\log(\epsilon_D^{(0)}/\epsilon_D) = \left(n +
  \tfrac{R^2}{\lambda \gamma}\right) \, \log(\epsilon_D^{(0)}/\epsilon_D) ~,
\]
we are guaranteed that $\E[ \epsilon_D^{(t)} ]$ would be smaller than
$\epsilon_D$.  

Using again \eqref{eqn:Ialsoneedthis}, we can also
obtain that
\begin{equation}
\E[P(w^{(t)})-D(\alpha^{(t)})]  \le \frac{n}{s}
\E[D(\alpha^{(t+1)})-D(\alpha^{(t)})] = \frac{n}{s}
\E[\epsilon_D^{(t)} - \epsilon_D^{(t+1)}] \le \frac{n}{s} \E[\epsilon_D^{(t)}] . \label{eqn:dgap-bound-smooth}
\end{equation}
So, requiring $\E[\epsilon_D^{(t)}] \le \frac{s}{n} \epsilon_P$ we obtain
an expected duality gap of at most $\epsilon_P$. This means that we should
require
\[
t \ge \left(n +
  \tfrac{R^2}{\lambda \gamma}\right) \, \log( (n + \tfrac{R^2}{\lambda \gamma})   \cdot \tfrac{\epsilon_D^{(0)}}{\epsilon_P}) ~,
\]
which proves the first part of \thmref{thm:smooth}. 

Next, we sum the first inequality of \eqref{eqn:dgap-bound-smooth} over $t=T_0+1,\ldots,T$ to obtain
\[
\E\left[ \frac{1}{T-T_0} \sum_{t=T_0+1}^{T} (P(w^{(t)})-D(\alpha^{(t)}))\right] \le 
\frac{n}{s(T-T_0)} \E[D(\alpha^{(T+1)})-D(\alpha^{(T_0+1)})] .
\]
Now, if we choose $\bar{w},\bar{\alpha}$ to be either the average
vectors or a randomly chosen vector over $t \in \{T_0+1,\ldots,T\}$,
then the above implies
\begin{align*}
\E[ P(\bar{w})-D(\bar{\alpha})] &\le 
\frac{n}{s(T-T_0)} \E[D(\alpha^{(T+1)})-D(\alpha^{(T_0+1)})]  \\
&\le \frac{n}{s(T-T_0)} \E[\epsilon_D^{(T_0+1)})] \\
&\le \frac{n}{s(T-T_0)}
\epsilon_D^{(0)} e^{-\frac{sT_0}{n}}. 
\end{align*}
It follows that in order to obtain a result of 
$\E[ P(\bar{w})-D(\bar{\alpha})] \le \epsilon_P$, we need to have
\[
T_0 \ge \frac{n}{s} \log\left( \frac{n \epsilon_D^{(0)}}{s (T-T_0) \epsilon_P} \right) ~.
\]
In particular, the choice of $T-T_0 = \frac{n}{s}$ and $T_0 = \frac{n}{s}
\log(\epsilon_D^{(0)}/\epsilon_P)$ satisfies the above requirement. 
\end{proof}

\begin{proof}[Proof of \thmref{thm:HighProbsmooth}]
Define $t_0 = \lceil \frac{n}{s}
  \log(2\epsilon_D^{(0)}/\epsilon_D) \rceil$. 
  The proof of \thmref{thm:smooth}
  implies that for every $t$, $\E[\epsilon_D^{(t)}] \le
  \epsilon_D^{(0)}\,e^{-\frac{st}{n}}$. By Markov's inequality, with
  probability of at least $1/2$ we have $\epsilon_D^{(t)} \le
  2\epsilon_D^{(0)}\,e^{-\frac{st}{n}}$. Applying it for $t=t_0$ we get that
  $\epsilon_D^{(t_0)} \le \epsilon_D$ with probability of at least
  $1/2$. Now, lets apply the same argument again, this time with the
  initial dual sub-optimality being $\epsilon_D^{(t_0)}$. Since the dual
  is monotonically non-increasing, we have that $\epsilon_D^{(t_0)}
  \le \epsilon_D^{(0)}$. Therefore, the same argument tells us that
  with probability of at least $1/2$ we would have that
  $\epsilon_D^{(2t_0)} \le \epsilon_D$. Repeating this 
  $\lceil \log_2(1/\delta) \rceil$ times, we obtain that with
  probability of at least $1-\delta$, for some $k$ we have that
  $\epsilon_D^{(kt_0)} \le \epsilon_D$. Since the dual is monotonically
  non-decreasing, the claim about the dual sub-optimality follows.

Next, for the duality gap, using \eqref{eqn:lem1CorForSmooth} we have
that for every $t$ such that $\epsilon_D^{(t-1)} \le \epsilon_D$ we have
\[
P(w^{(t-1)})-D(\alpha^{(t-1)}) ~\le~
\frac{n}{s} \, \E[D(\alpha^{(t)})-D(\alpha^{(t-1)})] \le
\frac{n}{s} \, \epsilon_D ~.
\]
This proves the second claim of \thmref{thm:HighProbsmooth}. 

For the last claim, suppose that at round $T_0$ we have
$\epsilon_D^{(T_0)} \le \epsilon_D$. Let $T = T_0 + n/s$.  It follows
that if we choose $t$ uniformly at random from $\{T_0,\ldots,T-1\}$,
then $\E[ P(w^{(t)})-D(\alpha^{(t)})] \le \epsilon_D$. By Markov's
inequality, with probability of at least $1/2$ we have $
P(w^{(t)})-D(\alpha^{(t)}) \le 2\epsilon_D$. Therefore, if we choose
$\log_2(2/\delta)$ such random $t$, with probability $\ge 1-\delta/2$,
at least one of them will have $ P(w^{(t)})-D(\alpha^{(t)}) \le
2\epsilon_D$. Combining with the first claim of the theorem, choosing
$\epsilon_D = \epsilon_P/2$, and applying the union bound, we conclude
the proof of the last claim of \thmref{thm:HighProbsmooth}.
\end{proof}

\bibliographystyle{plainnat}
\bibliography{curRefs}

\end{document}